\newcommand{\cmark}{\ding{51}}%
\newcommand{\xmark}{\ding{55}}%
\newtheorem{theorem}{Theorem}
\newtheorem{proof}{Proof}
\newtheorem{lemma}{Lemma}
\title{PDO-e{$\text{S}^\text{2}$}CNNs: Partial Differential Operator Based Equivariant Spherical CNNs}
\author {
    Zhengyang Shen\textsuperscript{\rm 1},
    Tiancheng Shen\textsuperscript{\rm 2,3},
    Zhouchen Lin \textsuperscript{\rm 4,5}\thanks{Corresponding authors},
    Jinwen Ma \textsuperscript{\rm 1\text{*}}\\
}
\begin{document}

\maketitle
\begin{abstract}
Spherical signals exist in many applications, e.g., planetary data, LiDAR scans and digitalization of 3D objects, calling for models that can process spherical data effectively. It does not perform well when simply projecting spherical data into the 2D plane and then using planar convolution neural networks (CNNs), because of the distortion from projection and ineffective translation equivariance. 

Actually, good principles of designing spherical CNNs are avoiding distortions and converting the shift equivariance property in planar CNNs to rotation equivariance in the spherical domain. In this work, we use partial differential operators (PDOs) to design a spherical equivariant CNN, PDO-e{$\text{S}^\text{2}$}CNN, which is exactly rotation equivariant in the continuous domain. We then discretize PDO-e{$\text{S}^\text{2}$}CNNs, and analyze the equivariance error resulted from discretization. This is the first time that the equivariance error is theoretically analyzed in the spherical domain.  In experiments, PDO-e{$\text{S}^\text{2}$}CNNs show greater parameter efficiency and outperform other spherical CNNs significantly on several tasks.
\end{abstract}

\begin{table*}[t]
	\centering
	\small
		\begin{tabular}{l|l|l|l}
			\hline
			$\mathcal{S}^2$   & The sphere  &$SO(2),SO(3)$  &  Rotation groups \\
			$\alpha,\beta,\gamma$ & The ZYZ-Euler angles & $Z(\alpha),Y(\beta)$ & The rotations around $z$ and $y$ axes \\
			$R $ & $R\in SO(3)$ and $R=Z(\alpha_R)Y(\beta_R)Z(\gamma_R)$&$n=(0,0,1)^T$ & The north pole  \\
			$P$ & $P\in \mathcal{S}^2$, and $P(\alpha,\beta)=Z(\alpha)Y(\beta)n$ & $\bar P$ & The coset representative associated with $P$\\
			$\bar{P}\cdot SO(2)$ & $\{\bar{P}Z(\gamma)|\gamma\in[0,2\pi)\}$, the left coset of $SO(2)$ & $E\simeq F$ & $E$ is homeomorphic to $F$ \\
			$A_R$ & The 2D rotation matrix simplifying $Z(\gamma_R)$ & $C^{\infty}(\mathcal{S}^2)$ & The space of smooth function on $\mathcal{S}^2$ \\
			$C^{\infty}(SO(3))$&  The space of smooth function on $SO(3)$ & $s,so$ & $s\in C^{\infty}(\mathcal{S}^2)$ and $so\in C^{\infty}(SO(3))$ \\
			$\pi^{S}_{\widetilde{R}}[s], \pi^{SO}_{\widetilde{R}}[so]$& The group actions of $\widetilde R$ on $s$ and $so$ & $U_P$ & An open set of $\mathcal{S}^2$ containing $P$ \\
			$\widetilde U_p$ & $\widetilde U_p = \varphi_P(U_P)\subset \mathbb{R}^2$ & $\varphi_P$ & The homeomorphism from $U_P$ to $\widetilde U_P$\\
			$\bar s$ & The smooth function on $\mathbb{R}^3$ extended by $s$ & $H(\cdot,\cdot;\bm{w})$ & The polynomail parameterized by $\bm{w}$ \\
			$\partial/\partial x_i^{(A)}$ & The PDOs rotated by $A\in SO(2)$ & $\nabla_x [f],\nabla^2_x [f]$ & The gradient and the Hessian matrix of $f$\\
			$\nabla^{(A)}_x,(\nabla_x^{(A)})^2$& The operators defined in (\ref{gradient}) and (\ref{nabla2}) & $\chi^{(A)}$ &The differential operators defined in (\ref{aa}) \\
			$\Psi,\Phi$ & The mappings defined in (\ref{psi}) and (\ref{phi2}) & $\bm{I},\bm{F}$ & Discrete inputs and intermediate feature maps \\
			$f_P$ & $f_P=\bar s \cdot \varphi_P^{-1}$ & $O(\cdot)$ & The infinitesimal of the same order \\
			$D_P, \hat D_P$ & Partial derivatives matrix and its estimation & $\hat \nabla_x [f],\hat\nabla^2_x [f]$ & The estimations of $\nabla_x [f]$ and $\nabla^2_x [f]$ \\
			$\widetilde\chi^{(A)},\widetilde\Psi,\widetilde\Phi$& The discretizations of $\chi^{(A)},\Psi$ and $\Phi$ & $C_N$ & The $N$-ary cyclic group  \\	
			\hline
		\end{tabular}
	\caption{Summary of notations in this paper.}
	\label{notations}
\end{table*}

\section{Introduction}
Nowadays, many machine learning problems in computer vision require to process spherical data found in various applications; for instance, omnidirectional RGB-D images such as Matterport \cite{chang2017matterport3d:},  3D LiDAR scans from self-driving cars \cite{dewan2016motion-based} and molecular modelling \cite{boomsma2017spherical}. Unfortunately, naively mapping spherical signals to $\mathbb{R}^2$ and then using planar convolution neural networks (CNNs) is destined to fail, because this projection will result in space-varying distortions, and make shift equivariance ineffective.

Actually, the success of planar CNNs is mainly attributed to their shift equivariance \cite{cohen2016group}: shifting an image and then feeding it through multiple layers is the same as feeding the original image and then shifting the resulted feature maps. Since there do not exist translation symmetries in the spherical domain, a good principle of modifying planar CNNs to spherical CNNs is to convert the shift equivariance property to 3D rotation equivariance in the spherical domain. Motivated by this, \cite{cohen2018spherical} and \cite{esteves2018learning} propose spherical CNNs that are rotation equivariant. However, these methods represent the sphere using the spherical coordinates, which over-sample near the poles and cause significant distortion.

To avoid the impact of distortion, many recent works process spherical data using much more uniform representations. Among these methods,  \cite{cohen2019gauge} and \cite{zhang2019orientation-aware} approximate the sphere using the icosahedron and propose Icosahedral CNN and orientation-aware CNN, respectively. Specifically, Icosahedral CNN \cite{cohen2019gauge} is rotation equivariant while orientation-aware CNN \cite{zhang2019orientation-aware} is beneficial for some orientation-aware tasks, such as semantic segmentation with preferred orientation. However, these methods need project spherical data to the icosahedron, resulting in inaccurate representations.

Actually, there exist some discretizations of the sphere that are both uniform and accurate, like the icosahedral spherical mesh \cite{baumgardner1985icosahedral} and the HealPIX \cite{gorski2005healpix}. However, these representations are non-Euclidean structured grids \cite{bronstein2017geometric}, which have no uniform locality, thus conventional convolutions defined in the Euclidean case (e.g., square lattices) cannot work on them. Accordingly, \cite{jiang2019spherical} propose MeshConvs, which use orientable parameterized partial differential operators (PDOs) to process spherical signals represented by non-Euclidean structured grids. However, MeshConvs are not rotation equivariant.

In order to address the above problems, we combine the advantages of \cite{cohen2019gauge} and \cite{jiang2019spherical} together, and propose PDO-e{$\text{S}^\text{2}$}CNN, which is an orientable rotation equivariant spherical CNN based on PDOs. The distinction from \cite{cohen2019gauge} is that our model is orientation-aware and can work on much more accurate non-Euclidean structured representations instead of icosahedron, and the difference from \cite{jiang2019spherical} is that ours is rotation equivariant.

Our contributions are as follows:
\begin{itemize}
	\item We use PDOs to design an orientable spherical CNN that is exactly rotation equivariant in the continuous domain.
	
	\item The equivariance of the PDO-e{$\text{S}^\text{2}$}CNN becomes approximate after the discretization, and it is the first time that the theoretical equivariance error analysis is provided when the equivariance is approximate in the spherical domain.
	
	\item PDO-e{$\text{S}^\text{2}$}CNNs show greater parameter efficiency and perform very competitively on spherical MNIST classification, 2D-3D-S image segmentation and QM7 atomization energy prediction tasks.
\end{itemize} 

The paper is organized as follows. In Related Work, we review some works related to spherical CNNs. In Prior Knowledge, we introduce some prior knowledge to make our work easy to understand. In PDO-e{$\text{S}^\text{2}$}CNN, we use orientable parameterized PDOs to design PDO-e{$\text{S}^\text{2}$}CNN, which is exactly equivariant over $SO(3)$ in the continuous domain. In Implementation, we use Taylor's expansion to estimate PDOs accurately, implement PDO-e{$\text{S}^\text{2}$}CNN in the discrete domain, and provide the equivariance error analysis. In Experiments, we evaluate our method on multiple tasks.

\section{Related Work \label{section2}}
The most straightforward method to process spherical signals is mapping them into the planar domain via the equirectangular projection \cite{su2017learning}, and then using 2D CNNs. However, this projection will result in severe distortion. \cite{coors2018spherenet:} and \cite{zhao2018distortion-aware} implement CNNs on the tangent plane of the spherical image to reduce distortions. Even though, such methods are not equivariant in the spherical domain.

Actually, many works \cite{cohen2016group,cesa2019general,shen2020pdo,sosnovik2019scale,weiler20183d,ravanbakhsh2017equivariance} focus on incorporating equivariance into networks. For spherical data, some works \cite{bruna2014spectral,frossard2017graph-based,perraudin2019deepsphere:,defferrard2020deepsphere:} represent the sampled sphere as a graph connecting pixels according to distance between them and utilize graph-based methods to process it. \cite{perraudin2019deepsphere:} propose DeepSphere using isotropic filters, and achieve rotation equivariance. \cite{defferrard2020deepsphere:} improve DeepSphere and achieve a controllable tradeoff between cost and equivariance. However, the isotropic filters they use significantly restrict the capacity of models.

Also, there exist some works \cite{cohen2018spherical,esteves2018learning,kondor2018clebsch-gordan} using anisotropic filters to achieve rotation equivariance. Specifically, \cite{cohen2018spherical} extend the group equivariance theory into the spherical domain and use a generalized Fourier transform for implementation. However, these methods only work on nonuniform grids which over-sample near the poles. \cite{cohen2019gauge} further extend group equivariance to gauge equivariance, which is automatically $SO(3)$ equivariant in the spherical domain. However, their theory cannot show how the feature maps transform w.r.t. rotation transformations explicitly whereas ours can, which makes our theory more transparent and explainable.
\cite{cohen2019gauge} implement gauge equivariant CNNs on the surface of the icosahedron. The icosahedron is not an accurate discretization of the sphere, so their equivariance is weak. By contrast, our method can be applied on accurate discretizations of the sphere, achieving much better equivariance consequently.

Particularly, empirical results \cite{jiang2019spherical,zhang2019orientation-aware} show that orientation-aware CNNs can be beneficial for some tasks with orientation information. \cite{zhang2019orientation-aware} use north-aligned filters to achieve orientation-awareness, while \cite{jiang2019spherical} use orientable PDOs. In addition, \cite{jiang2019spherical} can process spherical signals on non-Euclidean structured grids easily using PDOs. However, their models are not rotation equivariant. Our PDO-e{$\text{S}^\text{2}$}CNN furthermore incorporates equivariance into the model, and introduces a new weight sharing scheme across filters, which brings greater parameter efficiency.

\section{Prior Knowledge\label{section3}}  

\subsection{Parameterization of $\mathcal{S}^2$ and $SO(3)$} 
We use $\mathcal{S}^2$ and $SO(3)$ to denote a sphere and a group of 3D rotations, respectively. Formally,
\begin{align*}
	&\mathcal{S}^2 =\{(x_1,x_2,x_3)|\|x\|_2=1\},\\
	&SO(3) = \{R\in \mathbb{R}^3|R^TR=I,\det(R)=1\}.
\end{align*}
We use the ZYZ Euler parameterization for $SO(3)$. An element $R\in SO(3)$ can be written as
\begin{equation*}
	R=Z(\alpha_R)Y(\beta_R)Z(\gamma_R),
\end{equation*}
where ZYZ-Euler angles $\alpha_R \in [0,2\pi),\beta_R \in [0,\pi]$ and $\gamma_R \in [0,2\pi)$, and $Z(\alpha)$ and $Y(\beta)$ are rotations around $z$ and $y$ axes, respectively. To be specific,
\begin{scriptsize}
	\begin{align*}
		Z(\alpha)=\left[
		\begin{array}{ccc}
			\cos\alpha & -\sin\alpha & 0\\
			\sin\alpha & \cos\alpha & 0\\
			0 & 0 & 1\\
		\end{array}
		\right],Y(\beta)=\left[
		\begin{array}{ccc}
			\cos\beta & 0  & \sin\beta\\
			0  & 1  & 0\\
			-\sin\beta & 0 & \cos\beta\\
		\end{array}
		\right].
	\end{align*}
\end{scriptsize}
Accordingly, we have a related parameterization for the sphere. An element $P\in \mathcal{S}^2$ can be written as $P(\alpha,\beta)=Z(\alpha)Y(\beta)n$, where $n$ is the north pole, i.e., $n=(0,0,1)^T$. Conversely, we can also calculate $\alpha$ and $\beta$ if $P=(x_1,x_2,x_3)^T$ is given. To be specific, if $P=(0,0,1)^T$, we take $\alpha=\beta=0$; if $P=(0,0,-1)^T$, we take $\alpha=0$ and $\beta=\pi$; otherwise, we have
\begin{small}
	\begin{align*}
		\alpha = 
		\begin{cases}
			\arccos \left(\frac{x_1}{\sqrt{x_1^2+x_2^2}}\right) & \text{$x_2\geq 0$}\\
			2\pi-\arccos \left(\frac{x_1}{\sqrt{x_1^2+x_2^2}}\right) &\text{$x_2< 0$}
		\end{cases},\,\,\beta =\arccos(x_3).
	\end{align*}
\end{small}
\begin{figure}
	\centering
	\includegraphics[scale=0.25]{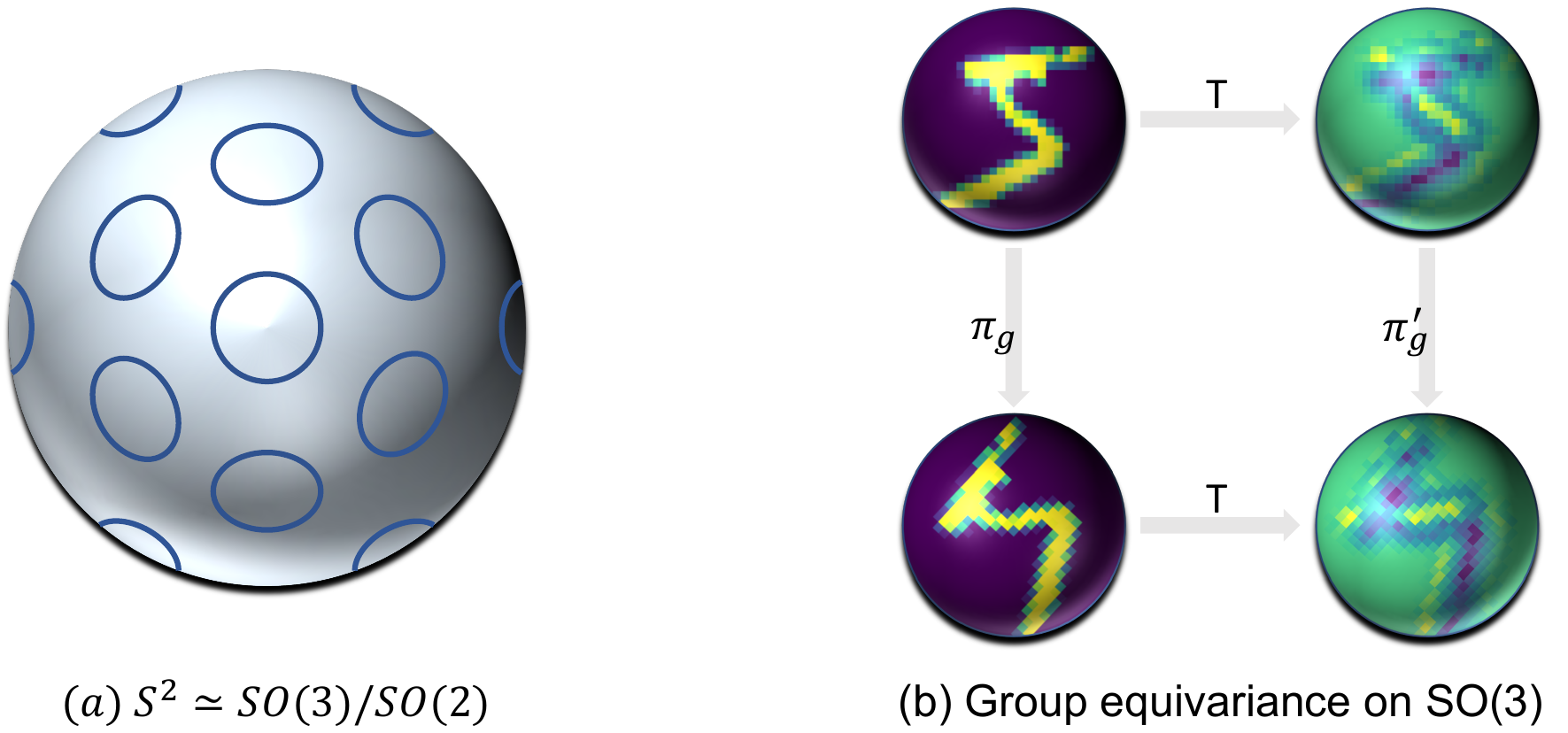}
	\caption{(a) $\mathcal{S}^2\simeq SO(3)/SO(2)$. $SO(3)$ can be viewed as a bundle of circles over the sphere; (b) Group equivariance on $SO(3)$. Transforming an input by a transformation $g\in SO(3)$ and then passing it through the mapping $T$ is equivalent to first mapping it through $T$ and then transforming the representation. }
	\label{figure1}
\end{figure}

This parameterization makes explicit the fact that the sphere is a quotient $\mathcal{S}^2\simeq SO(3)/SO(2)$\footnote{Given a group $\mathcal{G}$ and its subgroup $\mathcal{H}$, the left cosets $g\mathcal{H}$ of $\mathcal{H}$ partition $\mathcal{G}$, where $g\in \mathcal{G}$. We denote the set of left cosets as $\mathcal{G}/\mathcal{H}$. $E\simeq F$ denotes that $E$ is homeomorphic to $F$.}, where $SO(2)$ is the subgroup of $SO(3)$ and contains the rotations around the $z$ axis. Elements of the subgroup $SO(2)$ leave the north pole invariant, and have the form $Z(\gamma)$. The point $P(\alpha,\beta)\in \mathcal{S}^2$ is associated with the coset representative $\bar{P}=Z(\alpha)Y(\beta)\in SO(3)$. This element represents the left coset $\bar{P}\cdot SO(2)=\{\bar{P}Z(\gamma)|\gamma\in[0,2\pi)\}$. Intuitively, $SO(3)$ can be viewed as a bundle of circles ($SO(2)$) over the sphere, as we show in Figure \ref{figure1}(a). In this way, $\forall R\in SO(3)$, $R\in \bar{P}_RSO(2)$, where $\bar{P}_R=Z(\alpha_R)Y(\beta_R)$. As a result, we can parameterize $R$ as $(P_R,A_R)$, where $P_R=\bar{P}_Rn\in \mathcal{S}^2$ and $A_R\in SO(2)$. Specifically, $A_R$ is a 2D rotation matrix, which is a simplification of $Z(\gamma_R)$, i.e.,
\begin{equation*}
	A_R=\left[
	\begin{array}{p{1.1cm}<{\centering} p{1.1cm}<{\centering}}
		$\cos\gamma_R$ & $-\sin\gamma_R$ \\
		$\sin\gamma_R$ & $\cos\gamma_R$ \\
	\end{array}
	\right].
\end{equation*}

\subsection{Group Actions on Spherical Functions}
Inputs and feature maps can be naturally modeled as functions in the continuous domain. Specifically, we model the input $s$ as a smooth function on $\mathcal{S}^2$ and the intermediate feature map $so$ as a smooth function on $SO(3)$. Particularly, the smoothness of $so$ means that if we use the parameterization of $SO(3)$ mentioned above, the feature map $so(P,A)$ is smooth w.r.t. $P$ when $A$ is fixed. So $so$ can also be viewed as a smooth spherical function with infinite channels indexed by $A\in SO(2)$. We use $C^{\infty}(\mathcal{S}^2)$ and $C^{\infty}(SO(3))$ to denote the function spaces of $s$ and $so$, respectively .

In this way, rotation transformations acting on inputs and feature maps can be mathematically formulated as follows. \\
\textbf{Actions on Inputs}\quad Suppose that $s\in C^\infty(\mathcal{S}^2)$ and $\widetilde{R} \in SO(3)$, then $\widetilde{R}$ acts on $s$ in the following way: 
\begin{align*}
	\forall P \in \mathcal{S}^2,\quad \pi^{S}_{\widetilde{R}}[s](P)=s\left({\widetilde{R}}^{-1}P\right).
\end{align*}
\textbf{Actions on Feature Maps}\quad Suppose that $so \in C^\infty(SO(3))$ and $\widetilde{R} \in SO(3)$, then $\widetilde{R}$ acts on $so$ in the following way: 	
\begin{align}
	\forall R \in SO(3),\quad \pi^{SO}_{\widetilde{R}}[so](R)=so\left({\widetilde{R}}^{-1}R\right).
	\label{31}
\end{align}
If we use the parameterization of $SO(3)$, (\ref{31}) is of the following more intuitive form:
\begin{align*}
	\pi^{SO}_{\widetilde{R}}[so](P_R,A_R)&=so\left(P_{\widetilde{R}^{-1}R},A_{\widetilde{R}^{-1}R}\right)\\
	&=so\left(\widetilde{R}^{-1}P_R,A_{\widetilde{R}^{-1}R}\right),
\end{align*} 
where $(P_R,A_R)$ is the representation of $R$ and $P_{\widetilde{R}^{-1}R}=\widetilde{R}^{-1}Rn=\widetilde{R}^{-1}P_R$.

\subsection{Group Equivariance}
Equivariance measures how the outputs of a mapping transform in a predictable way with the transformation of the inputs. To be specific, let $T$ be a mapping, which could be represented by a deep neural network from the input feature space to the output feature space, and $\mathcal{G}$ is a transformation group. $T$ is called group equivariant if it satisfies
\begin{align*}
	\forall g \in \mathcal{G},\quad T[\pi_g[f]]=\pi^{\prime}_g[T[f]],
\end{align*}
where $f$ can be any input feature map in the input feature space, and $\pi _g$ and $\pi^{\prime}_g$ denote how the transformation $g$ acts on input features and output features, respectively. 

In our theory, we take the group $\mathcal{G}$ as $SO(3)$, and then focus on utilizing PDOs to design a neural network equivariant to $SO(3)$, as shown in Figure \ref{figure1}(b).

\begin{figure}
	\centering
	\includegraphics[scale=0.3]{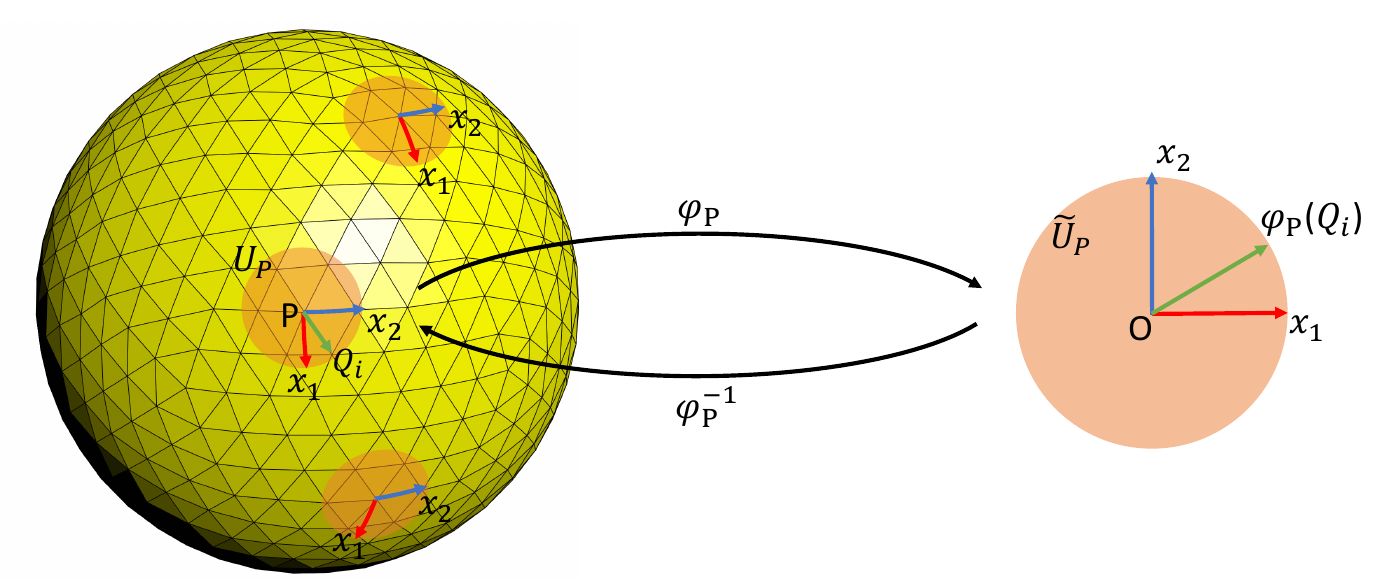}
	\caption{For any $P\in \mathcal{S}^2$, a homeomorphism $\varphi_P$ maps the chart $U_P\subset \mathcal{S}^2$ to an open subset $\widetilde U_P\subset \mathbb{R}^2$. The sphere is presented by a level-$3$ icosahedral mesh. }
	\label{figure2}
\end{figure}

\section{PDO-e{$\text{S}^\text{2}$}CNNs \label{section4}}
\subsection{Chart-based PDOs }
We define an atlas to help define PDOs acting on the spherical functions uniformly. To be specific, an atlas for $\mathcal{S}^2$ is a collection of charts whose domains cover $\mathcal{S}^2$. We denote the atlas as $\{(U_P,\varphi_P)|P\in \mathcal{S}^2\}$, where $U_P$ is an open subset of $\mathcal{S}^2$ containing $P$ and $\varphi_P:U_P\rightarrow \widetilde{U}_P$ is a homeomorphism from the chart $U_P$ to an open subset $\widetilde{U}_P=\varphi_P(U_P)\subset \mathbb{R}^2$ and $\varphi_P(P)=0$. The form of $\varphi_P$ is given by 
\begin{equation}
	\varphi_P^{-1}(x_1,x_2)=\bar{P}
	\left(x_1,x_2,\sqrt{1-|x|^2}\right)^T.
	\label{phi}
\end{equation}
In this way, as shown in Figure \ref{figure2}, for any point $P\in \mathcal{S}^2$ (except poles), $x_1$ resp. $x_2$ point to the north-south and east-west directions in the chart $U_P$, and the homeomorphism $\varphi_P$'s are uniformly defined over the sphere, which relate to orientable and uniform PDOs over the sphere.

In order to use PDOs, we suppose that the spherical function $s$ is smooth and denote it as $s\in C^{\infty}(\mathcal{S}^2)$. $s$ can always be extended to a smooth function $\bar{s}$ defined on $\mathbb{R}^3$, and we denote it as $\bar{s}\in C^{\infty}(\mathbb{R}^3)$. We emphasize that we need not obtain $\bar{s}$ explicitly from the given $s$, whereas we only use this notation for ease of derivation. Then the PDOs $\partial/\partial x_i$ and $\partial^2/\partial x_i\partial x_j(i,j=1,2)$\footnote{We only consider the PDOs up to the second order in this work.} act on the spherical function $s$ in the way that these PDOs act on the composite function $\bar{s}\cdot \varphi_P^{-1}\in C^{\infty}(\mathbb{R}^2)$\footnote{We use $[\cdot]$ to denote that an operator acts on a function.}. Formally, $\forall P\in \mathcal{S}^2$,
\begin{align*}
	\frac{\partial}{\partial x_i}[s](P)&=\frac{\partial}{\partial x_i}\left[\bar{s}\cdot\varphi_P^{-1}\right](0),\\
	\frac{\partial^2}{\partial x_i\partial x_j}[s](P)&=\frac{\partial^2}{\partial x_i\partial x_j}\left[\bar{s}\cdot\varphi_P^{-1}\right](0).
\end{align*}

By contrast, \cite{jiang2019spherical} define PDOs based on the spherical coordinates, which have high resolution near the pole and low resolution near the equator. So the scales of their PDOs are dependent on the latitudes. By contrast, the scales of our chart-based PDOs are independent of locations, resulting in much more uniform feature extration. Our definition of PDOs is also different from that in conventional manifold calculus in that we can deal with second-order PDOs without defining a smooth vector field. Actually, it is impossible to define a non-trivial smooth vector field over the sphere due to the hairy ball theorem \cite{milnor1978analytic}.

\subsection{Rotated Parameterized Differential Operators \label{rotated}}
Following \cite{jiang2019spherical,ruthotto2018deep,shen2020pdo}, we parameterize convolution kernels using a linear combination of PDOs. Specifically, we refer to $H$ as a parameterized second-order polynomial of $2$ variables, i.e.,
\begin{equation}
	H(u,v;\bm{w}) = w_1 + w_2u+ w_3v + w_4u^2+w_5uv+ w_6 v^2,\label{poly}
\end{equation}
where $\bm{w}$ are learnable parameters. If we take $u=\partial/\partial x_1$ and $v=\partial/\partial x_2$, then $H(\partial/\partial x_1,\partial/\partial x_2;\bm{w})$ becomes a linear combination of PDOs. For example, if $H(u,v;\bm{w})=u^2+uv$, then $H(\partial /\partial x_1,\partial/\partial x_2;\bm{w})=\partial^2/\partial x_1^2+\partial^2/\partial x_1\partial x_2$. 

We rotate these PDOs with a $2\times2$ rotation matrix $A\in SO(2)$, and obtain the following rotated parameterized differential operators:
\begin{align}
	\chi^{(A)}=H\left(\frac{\partial}{\partial x_1^{(A)}},\frac{\partial}{\partial x_2^{(A)}};\bm{w}\right),
	\label{aa}
\end{align}
where 
\begin{equation}
	\left(\frac{\partial}{\partial x_1^ {(A)}},\frac{\partial}{\partial x_2^ {(A)}}\right)^T= A^{-1} \left(\frac{\partial}{\partial x_1},\frac{\partial}{\partial x_2}\right)^T.
	\label{21}
\end{equation}
As a compact form, we can also rewrite (\ref{21}) as 
\begin{equation}
	\nabla_x^{(A)} = A^{-1}\nabla_x,\label{gradient}
\end{equation}
where $\nabla_x=(\partial/\partial x_1,\partial/\partial x_2)^T$ is the gradient operator. (\ref{21}) is equivalent to first rotating the coordinate system by $Z$, and then calculating gradients. In addition, it is easy to get that 
\begin{align}
	\left(\nabla_x^{(A)}\right)^2&\coloneqq
	\left[
	\begin{array}{cc}
		\frac{\partial}{\partial x_1^ {(A)}}\frac{\partial}{\partial x_1^ {(A)}} & \frac{\partial}{\partial x_1^ {(A)}}\frac{\partial}{\partial x_2^ {(A)}}
		\vspace{5pt}\\
		\frac{\partial}{\partial x_1^ {(A)}}\frac{\partial}{\partial x_2^ {(A)}} & \frac{\partial}{\partial x_2^ {(A)}}\frac{\partial}{\partial x_2^ {(A)}}
	\end{array}
	\right]\label{nabla2}\\
	&= A^{-1} \left[
	\begin{array}{cc}
		\frac{\partial^2}{\partial x_1^2} & \frac{\partial^2}{\partial x_1\partial x_2}\\
		\vspace{-8pt}\\
		\frac{\partial^2}{\partial x_1\partial x_2} & \frac{\partial^2}{\partial x_2^2}\\
	\end{array}
	\right]A=A^{-1}\nabla_x^2A. \notag
\end{align}
To make it more explicit, we emphasize that by the definition in (\ref{aa}), $\chi^{(A)}$'s are identical polynomials w.r.t. $\partial/\partial x_1^{(A)}$'s and $\partial/\partial x_2^{(A)}$'s, but different polynomials w.r.t. $\partial/\partial x_1$ and $\partial/\partial x_2$. To be specific,
\begin{scriptsize}
	\begin{align}
		\chi^{(A)}=&w_1+(w_2,w_3)\nabla_x^{(A)}+
		\bigg\langle\left[
		\begin{array}{cc}
			w_4 & \frac{w_5}{2}\\
			\frac{w_5}{2} & w_6\\
		\end{array}
		\right],
		\left(\nabla_x^{(A)}\right)^2
		\bigg\rangle\notag\\
		=&w_1+(w_2,w_3)A^{-1}\nabla_x+
		\bigg\langle\left[
		\begin{array}{cc}
			w_4 & \frac{w_5}{2}\\
			\frac{w_5}{2} & w_6\\
		\end{array}
		\right],
		A^{-1}\nabla_x^2A \bigg\rangle\notag\\
		=&w_1+(w_2,w_3)A^{-1}\nabla_x+
		\bigg\langle A\left[
		\begin{array}{cc}
			w_4 & \frac{w_5}{2}\\
			\frac{w_5}{2} & w_6\\
		\end{array}
		\right]A^{-1},
		\nabla_x^2\bigg\rangle,
		\label{coef}
	\end{align}
\end{scriptsize}
where $\langle\cdot,\cdot \rangle$ denotes the inner product. Particularly, these differential operators $\chi^{(A)}$'s share parameters $\bm{w}$, indicating great parameter efficiency.

From another point of view, the rotation of differential operators can also be viewed as changing the coefficients of PDOs (see (\ref{coef})), without changing the orientations of PDOs. Consequently, the rotated parameterized differential operators, $\chi^{(A)}$'s, and the subsequent PDO-e{$\text{S}^\text{2}$}CNN are still orientable. By contrast, some rotation equivariant spherical CNNs, such as Icosahedral CNNs \cite{cohen2019gauge}, assume no preferred orientation, so they are not orientable.

\subsection{Equivariant Differential Operators}
We define two mappings, $\Psi$ and $\Phi$, using the above-mentioned differential operators, $\chi^{(A)}$'s. To be specific, we use $\Psi$ to deal with inputs, which maps an input $s$ to a feature map defined on $SO(3)$: $\forall R\in SO(3)$,
\begin{align}
	\Psi [s](R) = \Psi [s](P_R,A_R)=\chi^{(A_R)}[s](P_R).
	\label{psi}
\end{align}

Then, we use $\Phi$ to deal with the resulting feature maps, which maps one feature map defined on $SO(3)$ to another feature map defined on $SO(3)$: $\forall R\in SO(3)$,
\begin{align}
	\Phi [so](R) &= \Phi [so](P_R,A_R)\notag\\
	&=\int_{SO(2)} \chi^{(A_R)}_{A}\,\,[so](P_R,A_RA) d\nu(A)\label{phi2},
\end{align}
where  $\nu$ is a measure on $SO(2)$. As for $\chi_A^{(A_R)}$, we use the subscript $A$ to distinguish the differential operators parameterized by different $\bm{w}_A$'s. The $so$ on the right hand side should be viewed as a spherical function indexed by $A_RA$ when the operator $\chi^{(A_R)}_A$ acts on it. 

Finally, we prove that the above two mappings, $\Psi$ and $\Phi$, are equivariant under arbitrary rotation transformation $\widetilde{R}\in SO(3)$ and show how the outputs transform w.r.t. the transformation of inputs. The proofs of theorems can be found in the Supplementary Material.

\begin{theorem}
	If $s \in C^{\infty}(\mathcal{S}^2)$ and $so \in C^{\infty}(SO(3))$, $\forall \widetilde{R}\in SO(3)$, we have
	\begin{align}
		\Psi \left[\pi^{S}_{\widetilde R}[s]\right]&=\pi^{SO}_{\widetilde R}\left[\Psi [s]\right],\label{equi1}\\	
		\Phi \left[\pi^{SO}_{\widetilde R}[so]\right] &= \pi^{SO}_{\widetilde R}\left[\Phi [so]\right].\label{4}
	\end{align}
	\label{theorem1}
\end{theorem}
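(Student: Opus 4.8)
The plan is to prove both \eqref{equi1} and \eqref{4} pointwise: fix $\widetilde R$ and an arbitrary output argument $R\in SO(3)$ with representation $(P_R,A_R)$, expand the two sides through the definitions \eqref{psi}, \eqref{phi2} and the action \eqref{31}, and reduce everything to a reparameterization statement inside the chart $\varphi_{P_R}$. The engine of the whole argument is the covariance of $\chi^{(A)}$ already visible in \eqref{coef}: if $f\colon\mathbb R^2\to\mathbb R$ and $g(x)=f(A_\delta x)$ for some $2\times2$ rotation $A_\delta$, then by the chain rule $\nabla_x g(0)=A_\delta^{T}\nabla f(0)$ and $\nabla_x^{2} g(0)=A_\delta^{T}\nabla^{2} f(0)A_\delta$. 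Feeding these through the rotated operators \eqref{gradient} and \eqref{nabla2} and using $A_\delta^{T}=A_\delta^{-1}$ gives $\chi^{(A_R)}[g](0)=\chi^{(A_\delta A_R)}[f](0)$, since $A_R^{-1}A_\delta^{-1}=(A_\delta A_R)^{-1}$. In words, rotating the chart coordinate by $A_\delta$ is exactly undone by advancing the operator index from $A_R$ to $A_\delta A_R$; this single fact drives both identities. Throughout, for $B\in SO(2)$ I write $\hat B=Z(\gamma_B)\in SO(3)$ for the $z$-rotation whose $2$D form is $B$, so that $R=\bar P_R\hat A_R$ and $B\mapsto\hat B$ is a homomorphism, and $Z(\delta)=\hat A_\delta$ with $A_\delta$ the planar rotation by $\delta$.

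\textbf{The $\Psi$ identity.} For \eqref{equi1} I would first establish the coset identity $\widetilde R^{-1}\bar P_R=\bar Q\,Z(\delta)$, where $Q=\widetilde R^{-1}P_R$ and $\delta$ is an angle depending on $P_R$ and $\widetilde R$: the left side sends $n$ to $Q$, hence lies in the fiber over $Q$ and equals $\bar Q$ times an element of $SO(2)$. Right-multiplying by $\hat A_R$ then yields $P_{\widetilde R^{-1}R}=Q$ and $A_{\widetilde R^{-1}R}=A_\delta A_R$. Next, since $Z(\delta)$ fixes the third coordinate and rotates the first two, $Z(\delta)(x_1,x_2,\sqrt{1-|x|^2})^{T}=(A_\delta x,\sqrt{1-|x|^2})^{T}$, so by \eqref{phi} we get $\widetilde R^{-1}\varphi_{P_R}^{-1}(x)=\varphi_Q^{-1}(A_\delta x)$. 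Consequently the chart composite of $\pi^{S}_{\widetilde R}[s]$ at $P_R$ equals the chart composite of $s$ at $Q$ precomposed with $A_\delta$, and applying the covariance of the first paragraph with the identification $A_\delta A_R=A_{\widetilde R^{-1}R}$ gives $\chi^{(A_R)}\!\big[\pi^{S}_{\widetilde R}[s]\big](P_R)=\chi^{(A_{\widetilde R^{-1}R})}[s](Q)$, which is exactly \eqref{equi1}.

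\textbf{The $\Phi$ identity.} For \eqref{4} I would run the same reduction fiberwise, for each fixed integration variable $A$. Two extra points arise. First, the integral over $SO(2)$ causes no trouble: I will show the integrand at parameter $A$ on the left equals the integrand at the \emph{same} $A$ on the right, so the measure $\nu$ passes through untouched and need not be invariant. Second, and this is the real content, I must track the fiber index. Reading ``$so$ viewed as a spherical function indexed by $A_RA$'' through the \emph{smooth} chart frame $\bar P_R\,\tilde\eta(x)$ at $\varphi_{P_R}^{-1}(x)$ (where $\tilde\eta(x)\in SO(3)$ is the orthonormalized pushforward of the coordinate frame of $\varphi_{P_R}$, with $\tilde\eta(0)=I$ and third column $\eta(x)=(x_1,x_2,\sqrt{1-|x|^2})^{T}$), the action \eqref{31} turns the integrand into $so$ evaluated at $\bar Q\,Z(\delta)\,\tilde\eta(x)\,\widehat{A_R A}$. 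The key lemma is the $Z$-equivariance of the chart frame,
\[ \tilde\eta(A_\delta x)=Z(\delta)\,\tilde\eta(x)\,Z(\delta)^{-1}, \]
which follows by differentiating the point identity $\eta(A_\delta x)=Z(\delta)\eta(x)$. Substituting it commutes $Z(\delta)$ past $\tilde\eta(x)$ and collapses the fiber data into the \emph{constant} index $A_\delta A_R A=A_{\widetilde R^{-1}R}A$, so the left integrand becomes the chart composite at $Q$ of $so(\cdot,A_{\widetilde R^{-1}R}A)$ precomposed with $A_\delta$. The fiberwise covariance then finishes exactly as for $\Psi$, and integrating in $A$ gives \eqref{4}.

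\textbf{Main obstacle.} I expect the fiber bookkeeping in \eqref{4} to be the crux. The action of $\widetilde R$ on feature maps mixes base point and fiber through the position-dependent rotation $Z(\delta)$ (a cocycle of the bundle $SO(3)\to\mathcal S^2$), and the danger is that differentiating across channels generates spurious terms proportional to the derivative of this rotation. The resolution is precisely to interpret the frozen index $A_RA$ through the smooth chart frame and to invoke the equivariance lemma, so that the index remains \emph{constant} under the chart derivatives; with the globally defined coset section instead, the effective channel index would vary with $x$ and the extra derivative terms would spoil equivariance. Establishing this lemma and verifying that $\tilde\eta$ is smooth at $x=0$ (where the coset section is singular, being discontinuous at the poles) is the step I would treat most carefully. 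The $\Psi$ identity is, by comparison, the scalar special case and becomes routine once the reparameterization $\widetilde R^{-1}\varphi_{P_R}^{-1}(x)=\varphi_Q^{-1}(A_\delta x)$ is in hand.
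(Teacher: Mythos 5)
Your proof is correct, and it takes a genuinely different route from the paper's. For (\ref{equi1}) the paper works extrinsically: it extends $s$ to $\bar s\in C^{\infty}(\mathbb{R}^3)$ and, in Lemmas \ref{lemma1} and \ref{lemma2}, computes the chart derivatives in closed form through the ambient gradient and Hessian, namely $\frac{\partial}{\partial x_i^{(A_R)}}[s](P_R)=(e_i^T,0)R^{-1}\nabla[\bar s](P_R)$ and a second-order analogue containing the curvature correction $(0,0,-e_i^Te_j)R^{-1}\nabla[\bar s](P_R)$, and then matches the two sides via $\nabla[\bar t](P_R)=\widetilde{R}\nabla[\bar s](\widetilde{R}^{-1}P_R)$. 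You instead stay intrinsic: the coset identity $\widetilde{R}^{-1}\bar P_R=\bar Q\,Z(\delta)$ yields the exact chart reparameterization $\widetilde{R}^{-1}\varphi_{P_R}^{-1}(x)=\varphi_Q^{-1}(A_\delta x)$ (your identifications $P_{\widetilde{R}^{-1}R}=Q$ and $A_{\widetilde{R}^{-1}R}=A_\delta A_R$ both check out), after which the single planar chain-rule covariance $\chi^{(A_R)}[f\circ A_\delta](0)=\chi^{(A_\delta A_R)}[f](0)$ disposes of the first- and second-order terms simultaneously. This avoids the extension $\bar s$ entirely, never computes the curvature terms (they cancel automatically because the reparameterization is an exact identity of chart functions near $0$), and would extend verbatim to a higher-order polynomial $H$, whereas the paper's route needs a fresh lemma per differentiation order; what the paper's computation buys in exchange is explicit closed-form chart-derivative formulas that display the transformation behaviour term by term.

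For (\ref{4}) your proposal is in fact more careful than the paper's own proof, and your ``main obstacle'' paragraph identifies a real gap in it. Under the literal reading of (\ref{phi2}) --- freeze the second slot at $A_RA$ through the global section $P\mapsto\bar P$ --- one has $\pi^{SO}_{\widetilde{R}}[so](P,A_RA)=so\left(\widetilde{R}^{-1}P,\,A_{\delta(P)}A_RA\right)$ with a position-dependent cocycle angle $\delta(P)$, and $\nabla_x\delta(0)\neq 0$ generically (for instance, with $\widetilde{R}=Y(\epsilon)$ and $P_R$ on the equator a direct computation gives $\partial_{x_2}\delta(0)=-\tan\epsilon$, the $\cot\beta$ twist rate of the meridian frame), so differentiating the frozen-index function produces fiber-derivative terms absent from the right-hand side of (\ref{4}). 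The paper's proof silently replaces this varying index by the constant $A_{\widetilde{R}^{-1}R}A$ in the second line of its $\Phi$ computation; that substitution is valid at $P=P_R$ but not in a neighborhood, which is precisely where the operator differentiates. Your repair --- reading the index-freezing through the smooth chart frame $\tilde\eta$ and proving $\tilde\eta(A_\delta x)=Z(\delta)\tilde\eta(x)Z(\delta)^{-1}$, so that the cocycle enters only as the constant rotation $Z(\delta)$ evaluated at the chart center and pulls out of all derivatives --- is exactly the ingredient that makes the equivariance exact. One detail you should pin down: the frame lemma requires the polar (symmetric) orthonormalization, i.e., the tangent block of $\tilde\eta$ must be $M(M^TM)^{-1/2}$ with $M=D\eta$, because differentiating $\eta(A_\delta x)=Z(\delta)\eta(x)$ gives $D\eta(A_\delta x)=Z(\delta)D\eta(x)A_\delta^{-1}$ and the right factor $A_\delta^{-1}$ passes through polar orthonormalization but not through Gram--Schmidt. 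Your remark that the measure $\nu$ need not be invariant (the integrands are matched at the same $A$) agrees with the paper's treatment.
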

\subsection{Equivariant Network Architectures}\label{general}
It is easy to use the above-mentioned two equivariant mappings, $\Psi$ and $\Phi$, to design an equivariant network. To be specific, according to the working spaces, we set a $\Psi$ as the first layer, followed by multiple $\Phi$'s, inserted by pointwise nonlinearities $\sigma(\cdot)$, e.g., ReLUs, which do not disturb the equivariance. Finally, we can get an equivariant network architecture $T[s]=\Phi^{(L)}\left[\cdots\sigma\left(\Phi^{(1)}\left[\sigma(\Psi[s])\right]\right)\right]$. \begin{theorem}
	If $ s \in C^{\infty}(\mathcal{S}^2)$, $\forall \widetilde{R}\in SO(3)$, we have
	\begin{align*}
		T\left[\pi^{S}_{\widetilde R}[s]\right]= \pi^{SO}_{\widetilde R}\left[T[s]\right].
	\end{align*}
	\label{theorem3}
\end{theorem}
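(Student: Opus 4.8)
The plan is to prove Theorem \ref{theorem3} by reducing it to Theorem \ref{theorem1} together with a commutation property of the pointwise nonlinearities, and then chaining the resulting equivariance relations layer by layer. The essential observation is that $T$ is a composition $T=\Phi^{(L)}\circ\sigma\circ\cdots\circ\sigma\circ\Phi^{(1)}\circ\sigma\circ\Psi$, and a composition of equivariant maps is equivariant provided the intermediate actions match up. Theorem \ref{theorem1} already supplies equivariance for $\Psi$, which intertwines $\pi^S$ with $\pi^{SO}$ via (\ref{equi1}), and for each $\Phi^{(i)}$, which intertwines $\pi^{SO}$ with itself via (\ref{4}). Hence the only new ingredient needed is that $\sigma$ also commutes with $\pi^{SO}$.

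First I would establish the lemma that a pointwise nonlinearity commutes with the action $\pi^{SO}$: for any $so\in C^\infty(SO(3))$ and any $\widetilde R\in SO(3)$,
\begin{align*}
\sigma\big(\pi^{SO}_{\widetilde R}[so]\big)=\pi^{SO}_{\widetilde R}\big[\sigma(so)\big].
\end{align*}
This is immediate from the definition (\ref{31}): the left side evaluated at $R$ equals $\sigma\big(so(\widetilde R^{-1}R)\big)$, while the right side evaluated at $R$ is $\sigma(so)(\widetilde R^{-1}R)=\sigma\big(so(\widetilde R^{-1}R)\big)$, since $\pi^{SO}_{\widetilde R}$ acts by precomposition on the domain and $\sigma$ acts on function values. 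Any operation applied pointwise over $SO(3)$ enjoys this property, so in particular ReLU does.

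Next I would push $\pi^{SO}_{\widetilde R}$ outward through the network one layer at a time, starting from the innermost application of $\Psi$. Applying (\ref{equi1}) converts $\Psi\big[\pi^S_{\widetilde R}[s]\big]$ into $\pi^{SO}_{\widetilde R}\big[\Psi[s]\big]$; the lemma then moves $\pi^{SO}_{\widetilde R}$ through the first $\sigma$; and (\ref{4}) moves it through $\Phi^{(1)}$. Iterating this alternation of the lemma and (\ref{4}) through all $L$ layers transports $\pi^{SO}_{\widetilde R}$ to the very outside, yielding $T\big[\pi^S_{\widetilde R}[s]\big]=\pi^{SO}_{\widetilde R}\big[T[s]\big]$. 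Formally this is a short induction on the layer index: the base case is exactly (\ref{equi1}), and each inductive step is one application of the lemma followed by one application of (\ref{4}).

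I do not expect a genuine obstacle here, since all the analytic work, namely establishing that the differential operators $\Psi$ and $\Phi$ intertwine the group actions, has already been carried out in Theorem \ref{theorem1}. The only point requiring care is the commutation lemma for $\sigma$, and even there the subtlety is merely conceptual: one must check that $\pi^{SO}$ acts purely by reparametrizing the domain, so that a value-wise operation commutes with it. Were $\sigma$ to mix the channels indexed by $A\in SO(2)$, which are themselves permuted nontrivially under the action, the argument would break; it is precisely because $\sigma$ is applied independently at each point of $SO(3)$ that the lemma holds. Once this is noted, the theorem follows by routine composition.
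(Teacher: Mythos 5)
Your proposal is correct and follows essentially the same route as the paper's own proof: the paper likewise chains (\ref{equi1}) and (\ref{4}) through the composition $T=\Phi^{(L)}\circ\sigma\circ\cdots\circ\Phi^{(1)}\circ\sigma\circ\Psi$, pushing $\pi^{SO}_{\widetilde R}$ outward layer by layer. The only difference is that you state and verify the pointwise-nonlinearity commutation $\sigma\big(\pi^{SO}_{\widetilde R}[so]\big)=\pi^{SO}_{\widetilde R}\big[\sigma(so)\big]$ as an explicit lemma, whereas the paper uses it silently in one step of its equality chain; your version is a welcome clarification, not a different argument.
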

That is, transforming an input $s$ by a transformation $\widetilde R$ (forming $\pi^{S}_{\widetilde R}$) and then passing it through the network $T$ gives the same result as first mapping $s$ through $T$ and then transforming the representation.

As discussed above, we only consider the case where inputs, $s$, and intermediate feature maps over $SO(3)$, $so$, only consist of single channel. In fact, our theory can be easily extended to a more general case where inputs and feature maps consist of multiple channels, and we only need to use multiple $\Psi$'s and $\Phi$'s to process inputs and generate outputs. 

Besides, in conventional CNNs, we always use $1\times 1$ convolutions to change the numbers of channels without introducing too many parameters. In PDO-e{$\text{S}^\text{2}$}CNN, this can be easily achieved by taking $\bm{w}$ as a one-hot vector. The details are given in the Supplementary Material. We can also incorporate equivariance into other architectures, e.g., ResNets, because shortcut connections do not disturb equivariance. 

\section{Implementation \label{section5}}
\subsection{Icosahedral Spherical Mesh}
In practice, spherical data are always given on discrete domain, instead of continuous domain. The icosahedral spherical mesh \cite{baumgardner1985icosahedral} is among the most uniform and accurate discretization of the sphere. Specifically, a spherical mesh can be obtained by progressively subdividing each face of the unit icosahedron into four triangles and reprojecting each node to unit distance from the origin. We start with the unit icosahedron as the level-0 mesh, and each progressive mesh resolution is one level above the previous. The level-3 icosahedral mesh is shown in Figure \ref{figure2}. The subdivision scheme for triangles also provides a natural coarsening and refinement scheme for the grid, which allows for easy implementations of downsampling and upsampling routines associated with CNN architectures. We emphasize that our method is not limited to the icosahedral spherical mesh, but can also use other discrete representations of the sphere easily, like the HealPIX \cite{gorski2005healpix}. In this work, we use the icosahedral spherical mesh for ease of implementation.
%
%
%

\subsection{Estimation of Partial Derivatives \label{espdo}}
We view the input spherical data $\bm{I}$ as a discrete function sampled from a smooth spherical function $s$ on the icosahedral spherical mesh vertices $\Omega\subset \mathcal{S}^2$, where $\bm{I}(P)=s(P),\forall P\in \Omega$, and use a numerical method to estimate partial derivatives at $P\in \Omega$ in the discrete domain. Firstly, we use $\varphi_P$ to map $P$ and $Q_i(i=1,2,\cdots,m)$ into an open set $\widetilde U_P\subset \mathbb{R}^2$, where $Q_i\in \Omega$ are the neighbor nodes of $P$ (see Figure \ref{figure2})\footnote{We only consider the neighbor nodes of $P$, in analogy with the commonly-used $3\times 3$ convolutions in planar CNNs.}. As a result, we get $\varphi_P(P)=0$, and $\varphi_P(Q_i)=(x_{i1},x_{i2})$, where $\forall i=1,2,\cdots,m$,
\begin{equation*}
	\left(x_{i1},x_{i2},\sqrt{1-x_{i1}^2-x_{i2}^2}\right)^T
	=\bar{P}^{-1}Q_i.
\end{equation*}

We denote $f_P=\bar s\cdot \varphi_P^{-1}$, so $f_P(0)=s(P)=\bm{I}(P)$ and $f_P(x_{i1},x_{i2})=s(Q_i)=\bm{I}(Q_i)$. We use Taylor's expansion to expand $f_P$ at the original point, then we have that $\forall i=1,2,\cdots,m$,
\begin{small}
	\begin{align}
		f_P(x_{i1},x_{i2})=&f_P(0,0) + x_{i1}\frac{\partial f_P}{\partial x_1}+x_{i2}\frac{\partial f_P}{\partial x_2}+\frac{1}{2}x_{i1}^2\frac{\partial^2 f_P}{\partial x_1^2}\notag\\
		&+x_{i1}x_{i2} \frac{\partial^2 f_P}{\partial x_1\partial x_2} +\frac{1}{2}x_{i2}^2\frac{\partial^2 f_P}{\partial x_2^2} + O(\rho_i^3)\label{fp}
	\end{align}
\end{small}
where all above partial derivatives are evaluated at $(0,0)$, and $\rho_i=\sqrt{x_{i1}^2+x_{i2}^2}$. Thus we have
\begin{footnotesize}
	\begin{equation*}
		\left[
		\begin{array}{p{2.7cm}<{\centering}}
			$\vdots$  \\
			$f_P(x_{i1},x_{i2})-f_P(0)$ \\
			$\vdots$\\
		\end{array}
		\right]
		\approx 
		\left[
		\begin{array}{p{0.2cm}<{\centering} p{0.2cm}<{\centering}p{0.2cm}<{\centering}p{0.55cm}<{\centering}p{0.3cm}<{\centering}}
			$\vdots$ & $\vdots$  & $\vdots$  & $\vdots$  & $\vdots$  \\
			$x_{i1}$ & $x_{i2}$ & $\frac{x_{i1}^2}{2}$ & $x_{i1}x_{i2}$ & $\frac{x_{i2}^2}{2}$\\
			$\vdots$  & $\vdots$  & $\vdots$ & $\vdots$  & $\vdots$  \\
		\end{array}
		\right]D_P,
		\label{approx}
	\end{equation*}
\end{footnotesize}
where $D_P$ is a partial derivatives matrix:
\begin{equation*}
	D_P=\left(\frac{\partial f_P}{\partial x_1},\frac{\partial f_P}{\partial x_2},\frac{\partial^2 f_P}{\partial x_1^2},\frac{\partial^2 f_P}{\partial x_1x_2},\frac{\partial^2 f_P}{\partial x_2^2}\right)^T\bigg|_{x_1=x_2=0}.
\end{equation*}
We denote the above approximate equations as $F_P\approx V_PD_P$, and use the least square method to estimate $D_P$:
\begin{equation*}
	{\hat D_P}=\mathop{\arg\min}_{D} \|V_PD-F_P\|_2= (V_P^TV_P)^{-1}V_P^TF_P.
\end{equation*}

Actually, we can easily estimate any partial derivatives using the similar method so long as we employ the appropriate Taylor's expansions. By contrast, \cite{jiang2019spherical} can only deal with limited PDOs, including $\partial/\partial x_1,\partial/\partial x_2$, and the Laplacian operator.

\subsection{Discretization of $SO(2)$}
As it is impossible to go through all the $A\in SO(2)$ in (\ref{psi}) and (\ref{phi2}), we need to discretize $SO(2)$. To be specific, we discretize the continuous group $SO(2)$ as the $N$-ary cyclic group $C_N$, where $C_N=\{e=A_0,A_1,\cdots,A_{N-1}\}$, and 

\begin{equation*}
	A_i=\left[
	\begin{array}{p{1.1cm}<{\centering} p{1.1cm}<{\centering}}
		$\cos\frac{2\pi i}{N}$ & $-\sin\frac{2\pi i}{N}$ \\
		$\sin\frac{2\pi i}{N} $ & $\cos\frac{2\pi i}{N}$ \\
	\end{array}
	\right].
\end{equation*}

Correspondingly, (\ref{psi}) should be discretized as: $\forall P\in \Omega$ and $i=0,1,\cdots,N-1$,
\begin{scriptsize}
	\begin{align*}
		&\widetilde \Psi [\bm{I}](P,i)=\widetilde \chi^{(A_i)}[\bm{I}](P)\\
		=&\left(w_1+(w_2,w_3)A_i^{-1}\widehat\nabla_x+\bigg\langle A_i\left[
		\begin{array}{cc}
			w_4 & \frac{w_5}{2}\\
			\frac{w_5}{2} & w_6\\
		\end{array}
		\right]A_i^{-1},
		\hat\nabla_x^2\bigg\rangle\right)\left[f_P\right](0)\notag\\
		=&w_1f_P(0)+(w_2,w_3)A_i^{-1}\hat\nabla_x\left[f_P\right](0)\\
		&+\bigg\langle A_i\left[
		\begin{array}{cc}
			w_4 & \frac{w_5}{2}\\
			\frac{w_5}{2} & w_6\\
		\end{array}
		\right]A_i^{-1},
		\hat\nabla_x^2\left[f_P\right](0)\bigg\rangle\notag,
	\end{align*}
\end{scriptsize}
where the partial derivatives are estimated using $\bm{I}$. In this way, when viewed as a spherical function, the output $\widetilde\Psi[\bm{I}]$ consists of $N$ channels, instead of infinite channels indexed by $A\in SO(2)$.  Similarly, (\ref{phi2}) is discretized as: $\forall P\in \Omega$ and $i=0,1,\cdots,N-1$,
\begin{align*}	
	&\widetilde\Phi [\bm{F}](P,i)=\frac{\nu(SO(2))}{N}\sum_{j=0}^{N-1} \widetilde\chi^{(Z_i)}_{Z_j}\,\,[\bm{F}](P,i\textcircled{+}j),
\end{align*}
where the intermediate feature map $\bm{F}$ is an $N$-channel discrete function sampled from the smooth function $so\in C^{\infty}(SO(3))$, i.e., $\bm{F}(P,i)=so(P,A_i)$, and $\textcircled{+}$ denotes the module-$N$ addition. As a result, $\widetilde \Psi$ and $\widetilde \Phi$ become discretized PDO-e{$\text{S}^\text{2}$}Convs. Particularly, batch normalization \cite{ioffe2015batch} should be implemented with a single scale and a single bias per PDO-e{$\text{S}^\text{2}$}Conv feature map in order to preserve equivariance.

\subsection{Equivariance Error Analysis}
As shown in Theorem \ref{theorem1}, the equivariance of PDO-e{$\text{S}^\text{2}$}Convs $\Psi$ and $\Phi$ is exact in the continuous domain, and it becomes approximate because of discretization in implementation. In (\ref{fp}), it is easy to verify that $O(\rho_1)=O(\rho_2)=\cdots =O(\rho_m)$ from the definition of icosahedral spherical mesh, and we write $O(\rho_i)=O(\rho)$ for simplicity. Then, we have the following equivariance error analysis.
\begin{theorem}
	$\forall \widetilde R\in SO(3),$
	\begin{align}
		&\widetilde\Psi\left[\pi_{\widetilde R}^S[\bm{I}]\right]=\pi_{\widetilde R}^{SO}\left[\widetilde\Psi [\bm{I}]\right]+O(\rho),\label{app1}\\
		&\widetilde\Phi\left[\pi_{\widetilde R}^{SO}[\bm{F}]\right]=\pi_{\widetilde R}^{SO}\left[\widetilde\Phi [\bm{F}]\right]+O(\rho)+O\left(\frac{1}{N^2}\right),\label{app2}
	\end{align}
	where transformations acting on discrete inputs and feature maps are defined as $\pi_{\widetilde R}^S[\bm{I}](P)=\pi_{\widetilde R}^S[s](P)$ and $\pi_{\widetilde R}^{SO}[\bm{F}](P,i)=\pi_{\widetilde R}^{SO}[so](P,A_i)$, respectively.
	\label{theorem4}
\end{theorem}

Particularly, we note that \cite{shen2020pdo} use PDOs to design an equivariant CNN over the Euclidean group, and achieve a quadratic order equivariance approximation for 2D images in the discrete domain. However, they can only deal with the data in the Euclidean space. Virtually, we extend their theory to the non-Euclidean geometry, i.e., the sphere. By contrast, we can only achieve a first order equivariance approximation w.r.t. the grid size $\rho$, as the representation of the sphere we use is non-Euclidean structured.

\section{Experiments \label{section6}}
We evaluate our PDO-e{$\text{S}^\text{2}$}CNNs on three datasets. The data preprocessing, model architectures and training details for each task are provided in the Supplementary Material for reproducing our results.

\subsection{Spherical MNIST Classification}
We follow \cite{cohen2018spherical} in the preparation of the spherical MNIST, and prepare non-rotated training and testing (N/N), non-rotated training and rotated testing (N/R) and rotated training and testing (R/R) tasks. The training set and the test set include 60,000 and 10,000 images, respectively. We randomly select 6,000 training images as a validation set, and choose the model with the lowest validation error during training. Inputs are on a level-4 icosahedral spherical mesh. For fair comparison with existing methods, we evaluate our method using a small and a large model, respectively.
\begin{table*}[t]
	\centering
		\begin{tabular}{l|c|ccc|c}
			\hline
			Model     &  R.E. & N/N     & N/R & R/R & \#Params\\
			\hline	
			S2CNN \cite{cohen2018spherical} & \cmark & $96$  & $94$ & $95$ & 58k   \\
			UGSCNN \cite{jiang2019spherical} &\xmark & $99.23$ & $35.60$ & $94.92$ & 62k \\
			HexRUNet-C \cite{zhang2019orientation-aware} & \xmark &  $99.45$ & $29.84$ & $97.05$ & 75k \\
			\hline
			\textbf{PDO-e{$\text{S}^\text{2}$}CNN}  & \cmark &$99.44\pm 0.06$ &$90.14\pm 0.58$ &$98.93\pm 0.08$ & 73k \\			
			\hline\hline
			SphereNet \cite{coors2018spherenet:} & \xmark & $94.4$ &- & - & 196k\\
			FFS2CNN \cite{kondor2018clebsch-gordan} & \cmark & $96.4$ & $\bm{96}$ & $96.6$ & 286k\\
			Icosahedral CNN \cite{cohen2019gauge}&\cmark & $99.43$ & $69.99$ & $99.31$ & 182k\\
			\hline
			\textbf{PDO-e{$\text{S}^\text{2}$}CNN}  & \cmark &$\bm{99.60\pm 0.04}$ &$94.25\pm 0.29$ & $\bm{99.45\pm 0.05}$ & 180k\\
			\hline
		\end{tabular}
	\caption{Results on the spherical MNIST dataset with non-rotated (N) and rotated (R) training and test data. The second column marks whether these models are rotation-equivariant (R.E.) in the spherical domain.}
	\label{tab1}
\end{table*}

As shown in Table \ref{tab1}, when using the small model (73k), our method achieves $99.44\%$ test accuracy on the N/N task. The result decreases to $90.14\%$ on the N/R task, mainly because of the equivariance error after discretization. HexRUNet-C  achieves comparable results using slightly more parameters, but it performs significantly worse on N/R and R/R tasks for lack of rotation equivariance. S2CNN performs better on the N/R task because it is nearly exactly equivariant. However, it cannot perform well on two more important tasks, N/N and R/R,  because of the distortion from nonuniform sampling. We argue that these two tasks are more important because the training and the test sets of most tasks are of identical distributions. 

When using the large model (180k), our method results in new SOTA results on the N/N and R/R tasks (99.60\% and 99.45\%), respectively, which improve the previous SOTA results (99.45\% and 99.31\%) significantly. Note that the previous SOTA results have been very competitive even for planar MNIST, and the error rates are further reduced by more than 20\% using our method. 

Also, we obtain a more competitive result (94.25\%) on the N/R task. By contrast, Icosahedral CNN only achieves $69.99\%$ test accuracy because it is only equivariant over the icosahedral group, which merely contains $60$ rotational symmetries. FFS2CNN performs the best on this task because it is also nearly exactly equivariant and use much more parameters, but it performs significantly worse on other tasks (N/N and R/R) because of the distortion in representation from nonuniform sampling.

\subsection{Omnidirectional Image Segmentation}
Omnidirectional semantic segmentation is an orientation-aware task since the natural scene images are always up-right due to gravity. We evaluate our method on the Stanford 2D-3D-S dataset \cite{2017arXiv170201105A}, which contains 1,413 equirectangular images with RGB+depth channels, and semantic labels across $13$ different classes. The input and output spherical signals are at the level-5 resolution. We use the official 3-fold cross validation to train and evaluate our model, and report the mean intersection over union (mIoU) and pixel accuracy (mAcc).

\begin{table}[t]
	\centering
	\small
	 \begin{tabular}{l|cc|c}
			\hline
			Model   & mAcc & mIoU & \#Params \\
			\hline	
			UNet  & 50.8 & 35.9 & - \\	
			Icosahedral CNN& 55.9 & 39.4 &- \\
			\hline
			\cite{eder2020tangent} & 50.9 & 38.3 & -\\
			UGSCNN & 54.7 & 38.3 & 5.18M\\
			HexRUNet & 58.6 & 43.3 & 1.59M \\
			\hline
			\textbf{PDO-e{$\text{S}^\text{2}$}CNN}  & $\bm{60.4\pm 1.0}$ & $\bm{44.6\pm 0.4}$ &  0.86M\\
			\hline
		\end{tabular}
	\caption{mAcc and mIoU comparison on 2D-3D-S at the level-5 resolution.}
	\label{tab3}
\end{table}

We report our main result in Table \ref{tab3}. As pointed out in \cite{zhang2019orientation-aware}, the 2D-3D-S dataset is acquired with preferred orientation, thus an orientation-aware system can be beneficial. Our model significantly outperforms icosahedral CNN, mainly because that our model is orientation-aware, while the latter assumes no preferred orientation. Compared with HexRUNet, an orientation-aware model, our method still performs significantly better, because we can process spherical data inherently, whereas HexRUNet can only process icosahedron data, which makes big difference. In addition, we use far fewer parameters (0.86M vs. 1.59M), showing great parameter efficiency from weight sharing across rotated filters. The detailed statistics of per-class for this task is shown in the Supplementary Material.

\subsection{Atomization Energy Prediction}
Finally, we apply our method to the QM7 dataset \cite{blum2009970,rupp2012fast}, where the goal is to regress over atomization energies of molecules given atomic positions $p_i$, and charges $z_i$. This dataset contains 7,165 molecules, and each molecule contains up to $23$ atoms of $5$ types (H, C, N, O, S). We use the official 5-fold cross validation to train and evaluate our model, and report the root mean square error (RMSE).

\begin{table}[t]
	\centering
		\begin{tabular}{l|c|c}
			\hline
			Model   &RMSE & \#Params \\
			\hline	
			MLP/Random CM  & $5.96\pm 0.48$ & -\\	
			S2CNN  & 8.47 & 1.4M\\
			FFS2CNN & 7.97 & 1.1M \\
			\hline
			\textbf{PDO-e{$\text{S}^\text{2}$}CNN}  &$\bm{3.78\pm 0.07}$ &  0.4M\\
			\hline
		\end{tabular}
	\caption{Experimental results on the QM7 task.}
	\label{tab7}
\end{table}

As shown in Table \ref{tab7}, compared with other spherical CNNs, including S2CNN and FFS2CNN, our model halves the RMSE using far fewer parameters (0.4M vs. 1M+), showing greater performance and parameter efficiency. Our method also significantly outperforms a very competitive model, the MLP trained on randomly permuted Coulomb matrices (CM) \cite{montavon2012learning}. In addition, this MLP method is unlikely to scale to large molecules, as it needs a large sample of random permutations, which grows exponentially with the numbers of molecules.

\section{Conclusions}
In this work, we define chart-based PDOs and then use them to design rotation-equivariant spherical CNNs, PDO-e{$\text{S}^\text{2}$}CNNs. PDO-e{$\text{S}^\text{2}$}CNNs are easy to implement on non-Euclidean structured representations, and we analyze the equivariance error from discretization. Extensive experiments verify the effectiveness of our method.

One drawback of our work is that the equivariance cannot be preserved as well as S2CNN and FFS2CNN do in the discrete domain. In future work, we will explore more representations of the sphere and better numerical calculation methods to improve the equivariance in the discrete domain.

\section*{ Acknowledgements}
This work was supported by the National Key Research and Development Program of China under grant 2018AAA0100205. Z. Lin is supported by NSF China (grant no.s 61625301 and 61731018), Major Scientific Research Project of Zhejiang Lab (grant no.s 2019KB0AC01 and 2019KB0AB02), Beijing Academy of Artificial Intelligence, and Qualcomm.

\bibliography{aaai_2021}
\clearpage
\section{Proof}

\begin{lemma}
	If $s \in C^{\infty}(S^2)$, $\forall R,\widetilde{R}\in SO(3)$ and $i=1,2$, we have
	\begin{equation}
		\frac{\partial}{\partial x_i^{(A_R)}} \left[\pi^{S}_{\widetilde R}[s]\right](P_R)=\frac{\partial}{\partial x_i^{(A_{\widetilde{R}^{-1}R})}} [s](P_{\widetilde R^{-1}R}),\label{first}\\
	\end{equation}
	where $(P_R,A_R)$ is the representation of $R$.
	\label{lemma1}
\end{lemma}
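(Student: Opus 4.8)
The plan is to unfold both sides in their local charts, reduce the identity to a chain-rule computation in $\mathbb{R}^2$, and show that the frame-rotation matrices $A_R$ and $A_{\widetilde R^{-1}R}$ exactly absorb the twist introduced by $\widetilde R$. Writing $g:=\pi^S_{\widetilde R}[s]$, by the definition of the chart-based PDOs and of $\nabla^{(A)}_x=A^{-1}\nabla_x$, the left-hand side is the $i$-th component of $A_R^{-1}\nabla_x$ applied at $0$ to the composite $\bar g\cdot\varphi_{P_R}^{-1}$. Using (\ref{phi}) together with the action $g(P)=s(\widetilde R^{-1}P)$, this composite is
\begin{equation*}
(\bar g\cdot\varphi_{P_R}^{-1})(x_1,x_2)=s\!\left(\widetilde R^{-1}\bar P_R\,(x_1,x_2,\sqrt{1-|x|^2})^{T}\right),
\end{equation*}
while the right-hand side is the $i$-th component of $A_{\widetilde R^{-1}R}^{-1}\nabla_y$ applied at $0$ to $k(y):=s(\bar P_{\widetilde R^{-1}R}(y_1,y_2,\sqrt{1-|y|^2})^{T})$.

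The geometric heart of the argument is to compare $\widetilde R^{-1}\bar P_R$ with $\bar P_{\widetilde R^{-1}R}$. Both map the north pole $n$ to $P_{\widetilde R^{-1}R}=\widetilde R^{-1}P_R$, so they differ by an element of $SO(2)$; and since $R=\bar P_R Z(\gamma_R)$ and $\widetilde R^{-1}R=\bar P_{\widetilde R^{-1}R}Z(\gamma_{\widetilde R^{-1}R})$, one gets
\begin{equation*}
\widetilde R^{-1}\bar P_R=\bar P_{\widetilde R^{-1}R}\,Z\!\left(\gamma_{\widetilde R^{-1}R}-\gamma_R\right).
\end{equation*}
The rotation $Z(\theta)$ acts on $(x_1,x_2,\sqrt{1-|x|^2})^{T}$ by rotating the first two coordinates with the planar block $M:=A_{\widetilde R^{-1}R}A_R^{-1}$ and fixing the third, and it preserves $|x|$ hence $\sqrt{1-|x|^2}$. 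Substituting this into the displayed composite shows that the left-hand composite equals $k\circ M$, i.e.\ $k$ precomposed with $x\mapsto Mx$.

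It then remains to apply the chain rule at $0$: for $h=k\circ M$ with $M$ a rotation, $\nabla_x[h](0)=M^{T}\nabla_y[k](0)$, so
\begin{equation*}
A_R^{-1}\nabla_x[h](0)=A_R^{-1}M^{T}\nabla_y[k](0)=A_{\widetilde R^{-1}R}^{-1}\nabla_y[k](0),
\end{equation*}
where the last equality uses $M^{T}=M^{-1}=A_RA_{\widetilde R^{-1}R}^{-1}$, hence $A_R^{-1}M^{T}=A_{\widetilde R^{-1}R}^{-1}$; taking $i$-th components gives the claim. I expect the main obstacle to be the middle step: verifying that the $SO(2)$ twist relating $\widetilde R^{-1}\bar P_R$ to $\bar P_{\widetilde R^{-1}R}$ is exactly $Z(\gamma_{\widetilde R^{-1}R}-\gamma_R)$, whose planar block is precisely $M=A_{\widetilde R^{-1}R}A_R^{-1}$. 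This is the one place where the coset bookkeeping of the ZYZ parameterization must be tracked carefully, and it is exactly what makes the frame rotation cancel.
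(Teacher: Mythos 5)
Your proof is correct, and it takes a genuinely different route from the paper's. The paper unfolds the chart derivative through the ambient extension $\bar s$: it computes the Jacobian of $x\mapsto(x_1,x_2,\sqrt{1-|x|^2})^T$, absorbs $A_R^{-1}$ into $Z(\gamma_R)^{-1}\bar P_R^{-1}=R^{-1}$, and arrives at the closed form $\frac{\partial}{\partial x_i^{(A_R)}}[s](P_R)=\left(e_i^T,0\right)R^{-1}\nabla[\bar s](P_R)$; the lemma then follows by transforming the ambient gradient via $\bar t(x)=\bar s(\widetilde R^{-1}x)$, so that $\nabla[\bar t](P_R)=\widetilde R\,\nabla[\bar s](\widetilde R^{-1}P_R)$. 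You instead stay entirely inside the two-dimensional charts: the coset identity $\widetilde R^{-1}\bar P_R=\bar P_{\widetilde R^{-1}R}\,Z(\gamma_{\widetilde R^{-1}R}-\gamma_R)$ is immediate from $R=\bar P_RZ(\gamma_R)$ and $\widetilde R^{-1}R=\bar P_{\widetilde R^{-1}R}Z(\gamma_{\widetilde R^{-1}R})$ (so the ``main obstacle'' you flag is in fact the one-line verification you already supplied), and since $Z(\theta)$ rotates the planar coordinates of $(x_1,x_2,\sqrt{1-|x|^2})^T$ while preserving $\sqrt{1-|x|^2}$, the two chart composites satisfy the exact functional identity $h=k\circ M$ with $M=A_{\widetilde R^{-1}R}A_R^{-1}$; a 2D chain rule and the cancellation $A_R^{-1}M^{T}=A_{\widetilde R^{-1}R}^{-1}$ finish the argument, with no ambient gradient or Jacobian computation needed (the extension $\bar g$ is harmless here because $\varphi_{P_R}^{-1}$ maps into the sphere, so the composite only sees values of $g$ on $\mathcal S^2$). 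As for what each approach buys: the paper's ambient formula makes the dependence of the derivative on $R$ explicit and is reused verbatim in its proof of the second-order lemma, which proceeds by differentiating that formula with product-rule computations involving the ambient Hessian. Your argument is more intrinsic and scales better: because $h=k\circ M$ holds exactly as functions near $0$, every higher-order chart derivative transforms by the constant rotation $M$ — e.g.\ $\nabla_x^2[h](0)=M^{T}\nabla_y^2[k](0)M$ — so the second-order analogue (Lemma \ref{lemma2}) and indeed any higher-order version follow at once from the same computation, bypassing the paper's lengthier two-term expansion. The only caveat, shared equally by the paper's proof, is the degeneracy of the ZYZ angles when $P_{\widetilde R^{-1}R}$ or $P_R$ sits at a pole, where the conventions fixing $(\alpha,\gamma)$ must be invoked.
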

\begin{proof}
	Firstly, we show that $\forall R\in SO(3)$ and $i=1,2$,
	\begin{align}
		&\frac{\partial}{\partial x_i^{(A_R)}} [s](P_R)\\
		=&e_i^T\nabla_x^{(A_R)}[s]\left(P_R\right)\notag\\
		=&e_i^TA_R^{-1}\nabla_x \left[\bar{s}\cdot \varphi^{-1}_{P_R}\right](0)\notag\\
		=&e_i^TA_R^{-1}\nabla_x \left[\bar{s}\left(\bar P_R\left(
		\begin{array}{c}
			x_1\\
			x_2\\
			\sqrt{1-|x|^2}\\
		\end{array}
		\right)\right)\right]\Bigg |_{x_1=x_2=0},
		\label{11}
	\end{align}
	where $e_1=(1,0)^T$ and $e_2=(0,1)^T$. We denote $y=F(x_1,x_2)=(x_1,x_2,\sqrt{1-|x|^2})^T$, then 
	\begin{align*}
		&\frac{\partial}{\partial x_i^{(A_R)}} [s](P_R)\\
		=&e_i^TA_R^{-1}J_F(x_1,x_2)^T \bar P_R^T\nabla [\bar{s}]\left(\bar P_Ry\right)\bigg|_{x_1=x_2=0},
	\end{align*}
	where the Jacobian
	\begin{align*}
		J_F(x_1,x_2)^T=&\left(
		\begin{array}{ccc}
			\partial y_1/\partial x_1 & \partial y_2/\partial x_1 & \partial y_3/\partial x_1\\
			\partial y_1/\partial x_2 & \partial y_2/\partial x_2 & \partial y_3/\partial x_2\\
		\end{array}
		\right)\\
		=&\left(
		\begin{array}{ccc}
			1 & 0 & \frac{-x_1}{\sqrt{1-|x|^2}}\\
			0 & 1 & \frac{-x_2}{\sqrt{1-|x|^2}}\\
		\end{array}
		\right).
	\end{align*}
	So
	\begin{align}
		&\frac{\partial}{\partial x_i^{(A_R)}} [s](P_R)\notag\\
		=& e_i^TA_R^{-1}\left(I, \frac{-x}{\sqrt{1-|x|^2}} \right) \bar P_R^{-1} \nabla [\bar{s}]\left(\bar P_Ry\right)\bigg|_{x_1=x_2=0}\notag\\
		=&e_i^T\left(I, \frac{-A_R^{-1}x}{\sqrt{1-|x|^2}} \right) Z(\gamma_R)^{-1}\bar P_R^{-1} \nabla [\bar s]\left( \bar P_Ry\right)\bigg|_{x_1=x_2=0}\notag\\
		=&e_i^T\left(I, \frac{-A_R^{-1}x}{\sqrt{1-|x|^2}} \right) R^{-1}\nabla [\bar s]\left(\bar P_Ry\right)\bigg|_{x_1=x_2=0}\notag\\
		=&\left(e_i^T, 0 \right) R^{-1}\nabla [\bar{s}]\left(P_R\right).
		\label{first_left}
	\end{align}
	Thus for the right hand side of (\ref{first}),
	\begin{align*}
		\frac{\partial}{\partial x_i^{(A_{\widetilde{R}^{-1}R})}} [s](P_{\widetilde R^{-1}R})=&\left(e_i^T, 0 \right) R^{-1}\widetilde R\nabla [\bar{s}]\left(P_{\widetilde R^{-1} R}\right)\\
		=&\left(e_i^T, 0 \right) R^{-1}\widetilde R\nabla [\bar{s}]\left(\widetilde{R}^{-1}P_R\right).
	\end{align*}
	For the left hand side of (\ref{first}), we denote a spherical function $t(P)=\pi^{S}_{\widetilde R}[s](P)=s(\widetilde R^{-1}P)$, then we have
	\begin{align*}
		\frac{\partial}{\partial x_i^{(A_R)}} \left[\pi^{S}_{\widetilde R}[s]\right](P_R)=&\frac{\partial}{\partial x_i^{(A_R)}} \left[t\right](P_R)\\
		=&\left(e_i^T, 0 \right) R^{-1}\nabla \left[\bar{t}\right]\left(P_R\right).
	\end{align*}
	Obviously, we can take the extended function on the Euclidean space $\bar t(x)=\bar s(\widetilde R^{-1}x),\forall x\in \mathbb{R}^3$, then 
	\begin{equation*}
		\nabla[\bar t](P_R)=\nabla \left[\bar s\left(\widetilde{R}^{-1}x\right)\right]\Big|_{x=P_R}=\widetilde{R}\nabla \left[\bar s\right]\left(\widetilde{R}^{-1}P_R\right).
	\end{equation*}
	As a result, we have
	\begin{align*}
		\frac{\partial}{\partial x_i^{(A_R)}} \left[\pi^{S}_{\widetilde R}[s]\right](P_R)=&\left( e_i^T, 0 \right) R^{-1}\widetilde R\nabla [\bar{s}]\left(\widetilde{R}^{-1}P_R\right)\\
		=&\frac{\partial}{\partial x_i^{(A_{\widetilde{R}^{-1}R})}} [s](P_{\widetilde R^{-1}R}).
	\end{align*}
	$\hfill\blacksquare$  
	\label{proof1}
\end{proof}
\begin{lemma}
	If $s \in C^{\infty}(S^2)$, $\forall R,\widetilde{R}\in SO(3),i,j=1,2$, we have
	\begin{align}
		&\frac{\partial}{\partial x_i^{(A_R)}}\frac{\partial}{\partial x_j^{(A_R)}} \left[\pi^{S}_{\widetilde R}[s]\right](P_R)\notag\\
		=&\frac{\partial}{\partial x_i^{(A_{\widetilde{R}^{-1}R})}}\frac{\partial}{\partial x_i^{(A_{\widetilde{R}^{-1}R})}} [s](P_{\widetilde R^{-1}R}),\label{second}
	\end{align}
	where $(P_R,A_R)$ is the representation of $R$.
	\label{lemma2}
\end{lemma}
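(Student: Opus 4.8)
The plan is to extend the intrinsic formula of Lemma \ref{lemma1} (specifically equation (\ref{first_left})) to second order. First I would write $f_{P_R}=\bar{s}\cdot\varphi_{P_R}^{-1}=\bar{s}\circ\bar{P}_R\circ F$, where $F(x_1,x_2)=(x_1,x_2,\sqrt{1-|x|^2})^T$, and apply the chain rule twice. Unlike the first-order case, the second-order chain rule produces two groups of terms: a Hessian term $\sum_{k,l}\frac{\partial^2\bar{s}}{\partial z_k\partial z_l}\frac{\partial F_k}{\partial x_i}\frac{\partial F_l}{\partial x_j}$ and a first-order curvature term $\sum_k\frac{\partial\bar{s}}{\partial z_k}\frac{\partial^2 F_k}{\partial x_i\partial x_j}$. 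Evaluating at the origin, where $J_F$ selects the first two coordinates and the only nonzero second derivatives are $\partial^2 F_3/\partial x_i\partial x_j=-\delta_{ij}$ (since $F_1,F_2$ are linear and $F_3$ is the sphere's height), collapses these terms to a clean expression.

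Then, exactly as in Lemma \ref{lemma1}, I would use the fact that the $3\times3$ matrix $Z(\gamma_R)$ acts on the first two coordinates precisely as the $2\times2$ matrix $A_R$ and fixes the third, which lets me absorb $A_R$ and $\bar{P}_R$ into $R=\bar{P}_R Z(\gamma_R)$. Combining this with $(\nabla_x^{(A_R)})^2=A_R^{-1}\nabla_x^2 A_R$ from (\ref{nabla2}), I expect to obtain the intrinsic formula
\begin{equation*}
	\frac{\partial}{\partial x_i^{(A_R)}}\frac{\partial}{\partial x_j^{(A_R)}}[s](P_R)=\left(e_i^T,0\right)R^{-1}\nabla^2[\bar{s}](P_R)R\left(e_j^T,0\right)^T-\delta_{ij}\,P_R^T\nabla[\bar{s}](P_R),
\end{equation*}
where $\nabla^2[\bar{s}]$ is the ambient Hessian and the second term is a curvature correction equal to $\delta_{ij}$ times the radial derivative of $\bar{s}$ at $P_R$.

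With this formula in hand, the equivariance is a substitution. For the left-hand side of (\ref{second}) I would set $t=\pi^{S}_{\widetilde{R}}[s]$ with extended function $\bar{t}(x)=\bar{s}(\widetilde{R}^{-1}x)$, so that $\nabla[\bar{t}](P_R)=\widetilde{R}\nabla[\bar{s}](\widetilde{R}^{-1}P_R)$ and $\nabla^2[\bar{t}](P_R)=\widetilde{R}\nabla^2[\bar{s}](\widetilde{R}^{-1}P_R)\widetilde{R}^{-1}$, and plug into the formula at base point $R$. For the right-hand side I would apply the formula at base point $\widetilde{R}^{-1}R$, using $P_{\widetilde{R}^{-1}R}=\widetilde{R}^{-1}P_R$. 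The Hessian terms then match because the conjugating factors combine as $R^{-1}\widetilde{R}\,(\cdots)\,\widetilde{R}^{-1}R$, and the curvature terms match because $P_R^T\widetilde{R}\nabla[\bar{s}](\widetilde{R}^{-1}P_R)=(\widetilde{R}^{-1}P_R)^T\nabla[\bar{s}](\widetilde{R}^{-1}P_R)$, using $\widetilde{R}^T=\widetilde{R}^{-1}$.

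The main obstacle is the curvature correction term $-\delta_{ij}P_R^T\nabla[\bar{s}](P_R)$, which has no analogue in the first-order Lemma \ref{lemma1} and arises precisely because the chart map $\varphi_P^{-1}$ is nonlinear (the sphere is curved). The crux is to verify that this extra term is itself \emph{exactly} equivariant; this works out because it depends on $\bar{s}$ only through the rotation-invariant radial derivative, so that conjugating by $\widetilde{R}$ introduces no surviving cross terms and the whole identity closes.
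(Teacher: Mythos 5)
Your proposal is correct and takes essentially the same route as the paper: both derive the same intrinsic formula for the rotated second-order derivatives of $\bar s\cdot\varphi_{P_R}^{-1}$ (the paper by applying the product rule to the first-order expression $h(x)=f(x)^Tg(x)$ inherited from Lemma \ref{lemma1}, you by the second-order chain rule with the curvature term $\partial^2 F_3/\partial x_i\partial x_j=-\delta_{ij}$ at the origin, which is the same computation organized differently), and then conclude by the identical substitution $t=\pi^S_{\widetilde R}[s]$, $\bar t(x)=\bar s(\widetilde R^{-1}x)$ with the gradient and Hessian transformation rules. Your curvature term $-\delta_{ij}\,P_R^T\nabla[\bar s](P_R)$ is exactly the paper's $\left(0,0,-e_i^Te_j\right)R^{-1}\nabla[\bar s](P_R)$, since $(0,0,1)R^{-1}=(Rn)^T=P_R^T$.
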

\begin{proof}
	Firstly, by definition, $\forall R,\widetilde{R}\in SO(3),i,j=1,2$,
	\begin{small}
		\begin{align*}
			&\frac{\partial}{\partial x_i^{(A_R)}}\frac{\partial}{\partial x_j^{(A_R)}} [s](P_R)\notag\\
			=&\frac{\partial}{\partial x_i^{(A_R)}}\frac{\partial}{\partial x_j^{(A_R)}} [\bar s\cdot \varphi^{-1}_{P_R}](0)\notag\\
			=&\frac{\partial}{\partial x_i^{(A_R)}}\left[e_j^T A_R^{-1}\nabla_x [\bar s\cdot \varphi^{-1}_{P_R}]\right](0)\notag\\
			=&e_i^TA_R^{-1}\nabla_x\left[e_j^T\left(I, \frac{-A_R^{-1}x}{\sqrt{1-|x|^2}} \right) R^{-1}\nabla [\bar s]\left(\bar P_Ry\right)\right]\Bigg|_{x_1=x_2=0},
		\end{align*}
	\end{small}
	where $e_1=(1,0)^T,e_2=(0,1)^T$ and $y=F(x_1,x_2)=(x_1,x_2,\sqrt{1-|x|^2})^T$. The derivation from the second line to the third line is due to (\ref{11}) and (\ref{first_left}). For ease of presentation, we denote that 
	\begin{equation*}
		h(x)=e_j^T\left( I, \frac{-A_R^{-1}x}{\sqrt{1-|x|^2}} \right) R^{-1}\nabla [\bar s]\left(\bar P_Ry\right),
	\end{equation*}
	and $h(x)=f(x)^Tg(x)$, where
	\begin{equation}
		f(x)=\left( I, \frac{-A_R^{-1}x}{\sqrt{1-|x|^2}} \right)^Te_j \label{f}
	\end{equation}
	and
	\begin{equation}
		g(x)=R^{-1}\nabla [\bar s]\left(\bar P_Ry\right).\label{g}
	\end{equation}
	As a result,
	\begin{scriptsize}
		\begin{align}
			&\frac{\partial}{\partial x_i^{(A_R)}}\frac{\partial}{\partial x_j^{(A_R)}} [s](P_R)\notag\\
			=&e_i^TA_R^{-1}\nabla_x\left[h(x)\right]\bigg|_{x_1=x_2=0}\notag\\
			=&e_i^TA_R^{-1}\nabla_x\left[f(x)^Tg(x)\right]\bigg|_{x_1=x_2=0}\notag\\
			=&e_i^TA_R^{-1}\left(Df(x)^Tg(x)\Big|_{x_1=x_2=0}+Dg(x)^Tf(x)\Big|_{x_1=x_2=0}\right)\notag\\
			=&e_i^TA_R^{-1}Df(x)^Tg(x)\bigg|_{x_1=x_2=0}+e_i^TA_R^{-1}Dg(x)^Tf(x)\bigg|_{x_1=x_2=0}.
			\label{two_terms}
		\end{align}
	\end{scriptsize}
	Firstly, we calculate the first term of the right hand side of (\ref{two_terms}). When $e_j=e_1$ in (\ref{f}) and (\ref{g}), we have 
	\begin{equation*}
		f(x)^T=\left(1,0,-\frac{\cos\gamma_R x_1+\sin \gamma_R x_2}{\sqrt{1-|x|^2}}\right),
	\end{equation*}
	then
	\begin{align*}
		Df(x)^T=&\left(0,0,-\frac{1}{\sqrt{1-|x|^2}}\left(
		\begin{array}{c}
			\cos \gamma_R\\
			\sin \gamma_R\\
		\end{array}
		\right)\right.\\
		&\left.-(\cos\gamma_R x_1+\sin \gamma_R x_2)
		\nabla_x\left[\frac{1}{\sqrt{1-|x|^2}}\right]\right).
	\end{align*}
	So
	
	\begin{align*}
		&e_i^TA_R^{-1}Df(x)^Tg(x)\bigg|_{x_1=x_2=0} \\
		= &e_i^T 
		\left(
		\begin{array}{ccc}
			0 & 0 & -1\\
			0 & 0 & 0\\
		\end{array}
		\right)
		R^{-1}\nabla [\bar s]\left(P_R\right).
	\end{align*}
	Similarly, we can get that when $e_j=e_2$,
	\begin{align*}
		&e_i^TA_R^{-1}Df(x)^Tg(x)\bigg|_{x_1=x_2=0} \\
		= &e_i^T 
		\left(
		\begin{array}{ccc}
			0 & 0 & 0\\
			0 & 0 & -1\\
		\end{array}
		\right)
		R^{-1}\nabla [\bar s]\left(P_R\right).
	\end{align*}
	In all,
	\begin{align*}
		&e_i^TA_R^{-1}Df(x)^Tg(x)\bigg|_{x_1=x_2=0} \\
		= &\left(0,0,-e_i^Te_j\right)
		R^{-1}\nabla [\bar s]\left(P_R\right).
	\end{align*}
	Now we calculate the second term of the right hand side of (\ref{two_terms}), we have
	\begin{align*}
		g(x)^T=&\left(\nabla [\bar s]\left(\bar P_Ry\right)\right)^TR\\
		=&\left(\partial_1 [\bar s]\left(\bar P_Ry\right),\partial_2 [\bar s]\left(\bar P_Ry\right),\partial_3 [\bar s]\left(\bar P_Ry\right)\right)R,
	\end{align*}
	where $\partial_k$ deontes the first-order PDO w.r.t. the $k$-th coordinate, so
	\begin{small}
		\begin{align*}
			e_i^TA_R^{-1}Dg(x)^T=&e_i^TA_R^{-1}\left(\nabla_x\left[\partial_1 [\bar s]\left(\bar P_Ry\right)\right]\right.,\\
			&\left.\nabla_x\left[\partial_2 [\bar s]\left(\bar P_Ry\right)\right],\nabla_x\left[\partial_3 [\bar s]\left(\bar P_Ry\right)\right]\right)R.
		\end{align*}
	\end{small}
	According to (\ref{11}) and (\ref{first_left}), we can get that
	\begin{align*}
		&e_i^TA_R^{-1}\nabla_x\left[\partial_k [\bar s]\left(\bar P_Ry\right)\right]\\
		=&e_i^T\left(\ I, \frac{-A_R^{-1}x}{\sqrt{1-|x|^2}} \right) R^{-1}\nabla \left[\partial_k [\bar s]\right]\left(\bar P_Ry\right),
	\end{align*}
	i.e.,
	\begin{small}
		\begin{equation*}
			e_i^TA_R^{-1}Dg(x)^T=e_i^T\left(\ I, \frac{-A_R^{-1}x}{\sqrt{1-|x|^2}} \right) R^{-1}\nabla^2  [\bar s]\left(\bar P_Ry\right)R.
		\end{equation*}
	\end{small}
	So
	\begin{align*}
		&e_i^TA_R^{-1}Dg(x)^Tf(x)\bigg|_{x_1=x_2=0}\\
		=&\left( e_i^T, 0 \right) R^{-1}\nabla^2  [\bar s]\left(P_R\right)R\left(e_j^T,0\right)^T.
	\end{align*}
	As a result, we have
	\begin{align*}
		&\frac{\partial}{\partial x_i^{(A_R)}}\frac{\partial}{\partial x_j^{(A_R)}} [s](P_R)\\
		=&e_i^TA_R^{-1}Df(x)^Tg(x)\Big|_{x_1=x_2=0}\\
		&+e_i^TA_R^{-1}Dg(x)^Tf(x)\Big|_{x_1=x_2=0}\notag\\
		=&\left(0,0,-e_i^Te_j\right)R^{-1}\nabla [\bar s]\left(P_R\right)\\
		&+\left( e_i^T, 0 \right) R^{-1}\nabla^2  [\bar s]\left(P_R\right)R\left(e_j^T,0\right)^T.
	\end{align*}
	Thus for the right hand of (\ref{second}),
	\begin{align*}
		&\frac{\partial}{\partial x_i^{(A_{\widetilde{R}^{-1}R})}}\frac{\partial}{\partial x_j^{(A_{\widetilde{R}^{-1}R})}} [s](P_{\widetilde R^{-1}R})\notag\\
		=&\left(0,0,-e_i^Te_j\right)R^{-1}\widetilde{R}\nabla [\bar s]\left(\widetilde R^{-1}P_R\right)\\
		&+\left( e_i^T, 0 \right) R^{-1}\widetilde{R}\nabla^2  [\bar s]\left(\widetilde{R}^{-1}P_R\right)\widetilde{R}^{-1}R\left(e_j^T,0\right)^T.
	\end{align*}
	As for the left hand of (\ref{second}), similar to Proof \ref{proof1}, we denote that the spherical function $t(P)=\pi^{S}_{\widetilde R}[s](P)=s(\widetilde R^{-1}P)$ and the extended 3D function $\bar t(x)=\bar s(\widetilde R^{-1}x),\forall x\in \mathbb{R}^3$, then
	\begin{small}
		\begin{align*}
			&\nabla[\bar t](P_R)=\nabla \left[\bar s\left(\widetilde{R}^{-1}x\right)\right]\Big|_{x=P_R}=\widetilde{R}\nabla \left[\bar s\right]\left(\widetilde{R}^{-1}P_R\right),\\
			&\nabla^2[\bar t](P_R)=\nabla^2 \left[\bar s\left(\widetilde{R}^{-1}x\right)\right]\Big|_{x=P_R}=\widetilde{R}\nabla^2 \left[\bar s\right]\left(\widetilde{R}^{-1}P_R\right)\widetilde{R}^{-1}.
		\end{align*}
	\end{small}
	So
	\begin{align*}
		&\frac{\partial}{\partial x_i^{(A_R)}}\frac{\partial}{\partial x_j^{(A_R)}} \left[\pi^{S}_{\widetilde R}[s]\right](P_R)\\
		=&\frac{\partial}{\partial x_i^{(A_R)}}\frac{\partial}{\partial x_j^{(A_R)}} \left[t\right](P_R)\notag\\
		=&\left(0,0,-e_i^Te_j\right)R^{-1}\nabla [\bar t]\left(P_R\right)\\
		&+\left( e_i^T, 0 \right) R^{-1}\nabla^2  [\bar t]\left(P_R\right)R\left(e_j^T,0\right)^T\notag\\
		=&\left(0,0,-e_i^Te_j\right)R^{-1}\widetilde{R}\nabla [\bar s]\left(\widetilde R^{-1}P_R\right)\\
		&+\left( e_i^T, 0 \right) R^{-1}\widetilde{R}\nabla^2  [\bar s]\left(\widetilde{R}^{-1}P_R\right)\widetilde{R}^{-1}R\left(e_j^T,0\right)^T\notag\\
		=&\frac{\partial}{\partial x_i^{(A_{\widetilde{R}^{-1}R})}}\frac{\partial}{\partial x_j^{(A_{\widetilde{R}^{-1}R})}} [s](P_{\widetilde R^{-1}R}).
	\end{align*}
	$\hfill\blacksquare$  
\end{proof}
\begin{theorem}
	If $s \in C^{\infty}(S^2)$ and $so \in C^{\infty}(SO(3))$, $\forall \widetilde{R}\in SO(3)$, we have
	\begin{align}
		\Psi \left[\pi^{S}_{\widetilde R}[s]\right]&=\pi^{SO}_{\widetilde R}\left[\Psi [s]\right],\label{equi1}\\	
		\Phi \left[\pi^{SO}_{\widetilde R}[so]\right] &= \pi^{SO}_{\widetilde R}\left[\Phi [so]\right].\label{4}
	\end{align}
	\label{theorem1}
\end{theorem}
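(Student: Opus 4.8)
The plan is to prove the two identities separately, reducing each to the per-coordinate transformation rules already packaged in Lemmas~\ref{lemma1} and~\ref{lemma2}.

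For \eqref{equi1} I would exploit that, by \eqref{aa} and \eqref{coef}, the operator $\chi^{(A_R)}$ is a fixed linear combination (coefficients read off from $H(\cdot,\cdot;\bm{w})$) of the constant operator, the two rotated first-order PDOs $\partial/\partial x_i^{(A_R)}$, and the three rotated second-order PDOs. By linearity it then suffices to track each monomial: I would apply Lemma~\ref{lemma1} to the first-order terms of $\chi^{(A_R)}\big[\pi^{S}_{\widetilde R}[s]\big](P_R)$, Lemma~\ref{lemma2} to the second-order terms, and note the zeroth-order term is immediate since $\pi^{S}_{\widetilde R}[s](P_R)=s(\widetilde R^{-1}P_R)=s(P_{\widetilde R^{-1}R})$. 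Because both lemmas simultaneously swap the rotation index $A_R\mapsto A_{\widetilde R^{-1}R}$ and the base point $P_R\mapsto P_{\widetilde R^{-1}R}$ while leaving the coefficients untouched, the whole combination collapses to $\chi^{(A_{\widetilde R^{-1}R})}[s](P_{\widetilde R^{-1}R})$. Finally I would match this with the right-hand side using \eqref{psi} at the element $\widetilde R^{-1}R$ (whose representation is $(P_{\widetilde R^{-1}R},A_{\widetilde R^{-1}R})$) and \eqref{31}, giving $\pi^{SO}_{\widetilde R}\big[\Psi[s]\big](R)=\Psi[s](\widetilde R^{-1}R)=\chi^{(A_{\widetilde R^{-1}R})}[s](P_{\widetilde R^{-1}R})$.

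For \eqref{4} I would first record the structural fact underlying the fibre bookkeeping: since $\widetilde R^{-1}\bar P_R$ and $\bar P_{\widetilde R^{-1}R}$ both send $n$ to $\widetilde R^{-1}P_R$, they differ by a right $SO(2)$ factor, so $\widetilde R^{-1}\bar P_R=\bar P_{\widetilde R^{-1}R}Z(\delta)$ and hence $A_{\widetilde R^{-1}R}=D\,A_R$ for a rotation $D=D(\widetilde R,P_R)\in SO(2)$ depending only on $\widetilde R$ and $P_R$. Fixing $A$, I would write the slice of the transformed feature map, $P\mapsto\pi^{SO}_{\widetilde R}[so](P,A_RA)$, as $\pi^{S}_{\widetilde R}[q_A]$ for the spherical function $q_A(P')=so\big(P',D(\widetilde R,\widetilde R P')A_RA\big)$, and then apply Lemmas~\ref{lemma1} and~\ref{lemma2} exactly as in the $\Psi$ case to get $\chi^{(A_R)}_A\big[\pi^{S}_{\widetilde R}[q_A]\big](P_R)=\chi^{(A_{\widetilde R^{-1}R})}_A[q_A](P_{\widetilde R^{-1}R})$. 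Since $q_A$ agrees in value with the target slice $so^{[A_{\widetilde R^{-1}R}A]}$ at $P_{\widetilde R^{-1}R}$ (where $D(\widetilde R,\widetilde R P_{\widetilde R^{-1}R})=D$), the desired integrand $\chi^{(A_{\widetilde R^{-1}R})}_A[so^{[A_{\widetilde R^{-1}R}A]}](P_{\widetilde R^{-1}R})$ emerges, and integrating over $A$ and comparing with \eqref{phi2} at $\widetilde R^{-1}R$ would reconstruct $\pi^{SO}_{\widetilde R}\big[\Phi[so]\big](R)$.

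The hard part will be that $q_A$ is not a genuine fixed-fibre slice: its fibre coordinate $D(\widetilde R,\widetilde R P')A_RA$ varies with the base point $P'$, so applying $\chi^{(A_{\widetilde R^{-1}R})}_A$ produces, beyond the target term, connection terms proportional to fibre derivatives $\partial_\theta so$ (and mixed derivatives), weighted by derivatives of the twist $P'\mapsto D(\widetilde R,\widetilde R P')$. This twist is precisely the non-flat connection of the bundle $SO(3)\to\mathcal S^2$, whose first derivative does not vanish in general, so these terms cannot be dropped pointwise in $A$. My plan to kill them is to integrate over $SO(2)$: after reindexing the fibre by the invariance of $\nu$ (the shift $A\mapsto A_{\widetilde R^{-1}R}A$ absorbs $D$ and the fixed offset), the accumulated connection contributions should assemble into total $\theta$-derivatives of $2\pi$-periodic integrands, whose integral over the circle $SO(2)$ vanishes by integration by parts with no boundary. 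Verifying that the first-order and second-order connection terms really combine into such an exact $\theta$-derivative is the step demanding the most careful computation, and it is exactly the continuous counterpart of the $O(1/N^2)$ residual in Theorem~\ref{theorem4} that appears once the $SO(2)$ integral is replaced by an $N$-point quadrature.
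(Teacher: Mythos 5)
Your argument for \eqref{equi1} is correct and is exactly the paper's proof: expand $\chi^{(A_R)}$ by linearity, apply Lemma~\ref{lemma1} to the first-order monomials and Lemma~\ref{lemma2} to the second-order ones (the zeroth-order term being immediate), and recognize the result as $\Psi[s]$ evaluated at $\widetilde R^{-1}R$.

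For \eqref{4} you part ways with the paper at the decisive step. The paper's proof never meets your connection terms: it rewrites $\pi^{SO}_{\widetilde R}[so](P,A_RA)$ as $so\bigl(\widetilde R^{-1}P,\,A_{\widetilde R^{-1}R}A\bigr)$ with the fibre index \emph{frozen} at its value at $P=P_R$, so that each integrand becomes $\chi^{(A_R)}_A$ applied to $\pi^{S}_{\widetilde R}$ of a fixed slice of $so$; the computation of \eqref{detail} then finishes pointwise in $A$, and the $SO(2)$ integral is pure bookkeeping --- no reindexing by invariance of $\nu$ and no integration by parts occur anywhere in the paper's argument. Your observation that the honest slice is twisted, $P'\mapsto so\bigl(P',D(\widetilde R,\widetilde RP')A_RA\bigr)$, and that freezing the index is valid for values at the single point $P_R$ but not for derivatives, is sound mathematics: the twist is the transition function of the section $P\mapsto\bar P$, and its chart gradient is generically nonzero (for $\widetilde R=Y(\beta_0)$ at $P=Y(\beta)n$ one computes $\partial\delta/\partial x_2=-\sin\beta_0/\bigl(\sin\beta\,\sin(\beta-\beta_0)\bigr)$, the familiar $\cot$-type spin-connection coefficient). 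So your route takes on a burden the paper discharges by fiat.

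The genuine gap is that your cancellation mechanism, which you yourself flag as unverified, does not close in the generality the theorem claims. After the $\nu$-invariance reindexing, the first-order connection contribution has the form $\kappa\int_0^{2\pi}c(\theta)\,\frac{d}{d\theta}\bigl[so\bigl(P_{\widetilde R^{-1}R},A_{\widetilde R^{-1}R}A_\theta\bigr)\bigr]\,d\theta$, where $A_\theta\in SO(2)$ is rotation by $\theta$, $\kappa$ carries the (nonzero) twist gradient, and $c(\theta)$ is built from the weights $\bm{w}_{A_\theta}$. This is a total $\theta$-derivative of a periodic integrand only when $c$ is constant; otherwise integration by parts leaves $-\kappa\int_0^{2\pi}c'(\theta)\,so(\cdot)\,d\theta$, which is generically nonzero. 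But \eqref{phi2} is built precisely around $A$-dependent weights $\bm{w}_A$ --- if they were constant in $A$, $\Phi$ would collapse to a single $\Psi$-type layer acting on the fibre-averaged spherical function $\int_{SO(2)}so(\cdot,A)\,d\nu(A)$, and in that degenerate case your periodicity argument does work. So the ``careful computation'' you defer is not a technicality: for general $\bm{w}_A$ the connection terms do not assemble into exact $\theta$-derivatives, and your plan as stated cannot reproduce \eqref{4}; what it actually exposes is that the paper's own frozen-index step is where these terms are silently discarded. Separately, your closing identification is off: the $O(1/N^2)$ term in Theorem~\ref{theorem4} is the quadrature error of replacing the $SO(2)$ integral by an $N$-point sum, present even with the twist handled exactly, and is not the discretized shadow of your connection terms.
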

\begin{proof}
	According to Lemmas \ref{lemma1} and \ref{lemma2}, $\forall R,\widetilde R\in SO(3)$,
	\begin{small}
		\begin{align}
			&\Psi \left[\pi^{S}_{\widetilde R}[s]\right](R)\notag\\
			=&\chi^{(A_R)} \left[\pi^{S}_{\widetilde R}[s]\right](P_R)\notag\\
			=&\left(w_1 + w_2\frac{\partial}{\partial x_1^{(A_R)}}+ w_3\frac{\partial}{\partial x_2^{(A_R)}}+ w_4\frac{\partial}{\partial x_1^{(A_R)}}\frac{\partial}{\partial x_1^{(A_R)}}\right.\notag\\
			&\left.+w_5\frac{\partial}{\partial x_1^{(A_R)}}\frac{\partial}{\partial x_2^{(A_R)}}+w_6\frac{\partial}{\partial x_2^{(A_R)}}\frac{\partial}{\partial x_2^{(A_R)}} \right)\left[\pi^{S}_{\widetilde R}[s]\right](P_R)\notag\\
			=&\left(w_1 + w_2\frac{\partial}{\partial x_1^{(A_{\widetilde{R}^{-1}R})}}+ w_3\frac{\partial}{\partial x_2^{(A_{\widetilde{R}^{-1}R})}}\right.\notag\\
			&\left.+ w_4\frac{\partial}{\partial x_1^{(A_{\widetilde{R}^{-1}R})}}\frac{\partial}{\partial x_1^{(A_{\widetilde{R}^{-1}R})}}+w_5\frac{\partial}{\partial x_1^{(A_{\widetilde{R}^{-1}R})}}\frac{\partial}{\partial x_2^{(A_{\widetilde{R}^{-1}R})}}\right.\notag\\
			&\left.+w_6\frac{\partial}{\partial x_2^{(A_{\widetilde{R}^{-1}R})}}\frac{\partial}{\partial x_2^{(A_{\widetilde{R}^{-1}R})}} \right)[s]\left(P_{\widetilde{R}^{-1}R}\right)\notag\\
			=&\chi^{(A_{\widetilde{R}^{-1}R})} [s](P_{\widetilde{R}^{-1}R})\notag\\
			=&\pi^{SO}_{\widetilde R}\left[\Psi [s]\right](R).
			\label{detail}
		\end{align}
	\end{small}
	So (\ref{equi1}) is satisfied. As for (\ref{4}),
	\begin{align*}
		&\Phi \left[\pi^{SO}_{\widetilde{R}}[so]\right]\left(P_R,A_R\right) \\
		=&\int_{SO(2)} \chi^{(A_R)}_{A}\left[so\left(\widetilde R ^{-1}P,A_{\widetilde{R}^{-1}R}A\right)\right]\Bigg |_{P=P_R} d\nu(A)\notag\\
		=&\int_{SO(2)} \chi^{(A_R)}_{A}\left[\pi^{S}_{\widetilde R}\left[so\left(P,A_{\widetilde{R}^{-1}R}A\right)\right]\right]\Bigg |_{P=P_R} d\nu(A)\notag\\
		= &\int_{SO(2)} \chi^{(A_{\widetilde R^{-1}R})}_{A}\left[so\right]\left(P_{\widetilde R^{-1}R},A_{\widetilde{R}^{-1}R}A\right) d\nu(A)\notag\\
		= &\pi^{SO}_{\widetilde{R}}\left[\int_{SO(2)} \chi^{(A_{R})}_{A}\left[so\right]\left(P_{R},A_{R}A\right) d\nu(A)\right]\notag\\
		=& \pi^{SO}_{\widetilde{R}}\left[\Phi[so]\right]\left(P_R,A_R\right).
	\end{align*}
	The derivation from the third line to the fourth line is due to (\ref{detail}). So (\ref{4}) is satisfied.$\hfill\blacksquare$  
\end{proof}

\begin{theorem}
	If $ s \in C^{\infty}(S^2)$, $\forall \widetilde{R}\in SO(3)$, we have
	\begin{align*}
		T\left[\pi^{S}_{\widetilde R}[s]\right]= \pi^{SO}_{\widetilde R}\left[T[s]\right].
	\end{align*}
	\label{theorem3}
\end{theorem}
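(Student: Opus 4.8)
The plan is to treat $T$ as a composition of maps, each of which is individually equivariant, and then to chain the equivariances together by induction on depth. Theorem \ref{theorem1} already supplies the equivariance of the two nontrivial building blocks, $\Psi$ and $\Phi$, so the only new ingredient needed is that the pointwise nonlinearity $\sigma$ also commutes with the $SO(3)$-action $\pi^{SO}_{\widetilde R}$.

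First I would establish the nonlinearity lemma. Since $\pi^{SO}_{\widetilde R}[so](R)=so(\widetilde R^{-1}R)$ acts purely by reindexing the domain of the feature map and leaves the function values untouched, whereas $\sigma$ acts on the values pointwise and independently of the index, the two operations commute:
\[
\sigma\left(\pi^{SO}_{\widetilde R}[so]\right)(R)=\sigma\left(so(\widetilde R^{-1}R)\right)=\pi^{SO}_{\widetilde R}\left[\sigma(so)\right](R).
\]
Hence $\sigma\circ\pi^{SO}_{\widetilde R}=\pi^{SO}_{\widetilde R}\circ\sigma$ for every $\widetilde R\in SO(3)$.

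Next I would combine these facts by induction on the layer index. For the base case, the first-layer output $\sigma\left(\Psi\left[\pi^{S}_{\widetilde R}[s]\right]\right)$ equals $\sigma\left(\pi^{SO}_{\widetilde R}\left[\Psi[s]\right]\right)=\pi^{SO}_{\widetilde R}\left[\sigma(\Psi[s])\right]$, by (\ref{equi1}) followed by the nonlinearity lemma. For the inductive step, I would assume that the feature map after $k$ layers intertwines with $\pi^{SO}_{\widetilde R}$; applying $\Phi^{(k+1)}$ and then $\sigma$ preserves this relation, using (\ref{4}) and the nonlinearity lemma respectively. Peeling $\widetilde R$ out through each layer in this way and arriving at $\Phi^{(L)}$ then yields $T\left[\pi^{S}_{\widetilde R}[s]\right]=\pi^{SO}_{\widetilde R}\left[T[s]\right]$.

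The argument is essentially bookkeeping once Theorem \ref{theorem1} is in hand, so I do not expect a genuine obstacle. The only point requiring care is to confirm that the composition is well-typed at every stage: that $\Psi$ lands in $C^{\infty}(SO(3))$ and that $\sigma$ together with each $\Phi^{(k)}$ preserves smoothness, so that the domain hypotheses of Theorem \ref{theorem1} remain valid layer by layer, and that it is the \emph{same} action $\pi^{SO}_{\widetilde R}$ (rather than a layer-dependent one) that intertwines every stage. This uniformity of the action is precisely what allows the telescoping induction to close.
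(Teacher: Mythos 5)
Your proposal is correct and takes essentially the same route as the paper: the paper's proof is exactly the telescoping computation you describe, pulling $\pi^{SO}_{\widetilde R}$ outward through each layer by applying (\ref{equi1}) at $\Psi$, (\ref{4}) at each $\Phi^{(k)}$, and the commutation of the pointwise $\sigma$ with the action in between. Your explicit nonlinearity lemma merely spells out a step the paper leaves implicit (it only remarks that pointwise nonlinearities ``do not disturb the equivariance''), so the two arguments are the same in substance.
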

\begin{proof}
	According to Theorems \ref{theorem1}, we have
	\begin{align*}
		T\left[\pi^{S}_{\widetilde R}[s]\right]=&\Phi^{(L)}\left[\cdots\sigma\left(\Phi^{(1)}\left[\sigma\left(\Psi\left[\pi^{S}_{\widetilde R}[s]\right]\right)\right]\right)\right]\notag\\
		=&\Phi^{(L)}\left[\cdots\sigma\left(\Phi^{(1)}\left[\sigma\left(\pi^{SO}_{\widetilde R}\left[\Psi[s]\right]\right)\right]\right)\right]\notag\\
		=&\Phi^{(L)}\left[\cdots\sigma\left(\Phi^{(1)}\left[\pi^{SO}_{\widetilde R}\left[\sigma\left(\Psi[s]\right)\right]\right]\right)\right]\notag\\
		=&\Phi^{(L)}\left[\cdots\sigma\left(\pi^{SO}_{\widetilde R}\left[\Phi^{(1)}\left[\sigma\left(\Psi[s]\right)\right]\right]\right)\right]\notag\\
		=& \pi^{SO}_{\widetilde R}\left[\Phi^{(L)}\left[\cdots\sigma\left(\Phi^{(1)}\left[\sigma(\Psi[s])\right]\right)\right]\right]\notag\\
		=&\pi^{SO}_{\widetilde R}\left[T[s]\right].
	\end{align*}
	$\hfill\blacksquare$  
\end{proof}

\begin{lemma}
	$\forall P\in\Omega$ and $\bm{w}\in \mathcal{\mathbb{R}}^5$,
	\begin{equation*}
		\bm{w}^TD_P=\bm{w}^T\hat D_P +O(\rho).
	\end{equation*}
	\label{lemma3}
\end{lemma}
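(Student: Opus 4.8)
The plan is to regard the least-squares estimate $\hat D_P$ as the exact derivative vector $D_P$ plus a perturbation induced by the Taylor remainder, and then to track how the inhomogeneous scaling of the columns of $V_P$ amplifies that remainder. First I would rewrite the expansion (\ref{fp}) as the exact identity $F_P = V_P D_P + r_P$, where $r_P$ stacks the remainders so that each of its entries is $O(\rho_i^3)=O(\rho^3)$. Substituting this into ${\hat D_P}=(V_P^TV_P)^{-1}V_P^TF_P$ yields
\begin{equation*}
\hat D_P - D_P = (V_P^TV_P)^{-1}V_P^T r_P =: \delta_P,
\end{equation*}
so it suffices to show that every component of $\delta_P$ is $O(\rho)$; since $\bm{w}$ is a fixed finite vector, $\bm{w}^T(D_P-\hat D_P)=-\bm{w}^T\delta_P=O(\rho)$ then follows at once.

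The subtlety is that the five columns of $V_P$ are not of the same order: the two columns built from $x_{i1},x_{i2}$ are $O(\rho)$, while the three columns built from $x_{i1}^2,x_{i1}x_{i2},x_{i2}^2$ are $O(\rho^2)$, so one cannot bound $(V_P^TV_P)^{-1}$ directly. To separate the scale, I would factor $V_P=\widetilde V_P S$ with $S=\mathrm{diag}(\rho,\rho,\rho^2,\rho^2,\rho^2)$ and $\widetilde V_P$ the matrix whose $i$-th row is $(\xi_{i1},\xi_{i2},\tfrac12\xi_{i1}^2,\xi_{i1}\xi_{i2},\tfrac12\xi_{i2}^2)$, where $\xi_{ij}=x_{ij}/\rho=O(1)$. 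This gives
\begin{equation*}
\delta_P = S^{-1}(\widetilde V_P^T\widetilde V_P)^{-1}\widetilde V_P^T r_P.
\end{equation*}

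Here $\widetilde V_P$, hence $\widetilde V_P^T\widetilde V_P$, has $O(1)$ entries, and $\widetilde V_P^T r_P$ is $O(\rho^3)$ componentwise; assuming $(\widetilde V_P^T\widetilde V_P)^{-1}=O(1)$, the vector $(\widetilde V_P^T\widetilde V_P)^{-1}\widetilde V_P^T r_P$ is $O(\rho^3)$. Multiplying by $S^{-1}=\mathrm{diag}(\rho^{-1},\rho^{-1},\rho^{-2},\rho^{-2},\rho^{-2})$ then produces $O(\rho^2)$ in the two first-derivative components and $O(\rho)$ in the three second-derivative components, so $\delta_P=O(\rho)$ componentwise and the claim follows.

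The main obstacle is the uniform nondegeneracy hidden in ``$(\widetilde V_P^T\widetilde V_P)^{-1}=O(1)$''. This amounts to showing that, after rescaling by $\rho$, the neighbor configurations of the icosahedral mesh stay uniformly well-conditioned for the quadratic least-squares fit as $P$ ranges over $\Omega$ and the refinement level grows (equivalently, the normalized neighbor points do not collapse onto a degenerate conic through the origin). I would establish this from the near-regular local hexagonal geometry of the subdivided icosahedron: each interior vertex has six neighbors whose normalized positions converge to a fixed regular configuration, for which $\widetilde V_P^T\widetilde V_P$ is readily checked to be invertible with bounded inverse, and the finitely many exceptional (degree-five) vertices are handled the same way.
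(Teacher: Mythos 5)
Your proof is correct and follows essentially the same route as the paper's: the identical factorization $V_P=\widetilde V_P S$ (the paper writes $V_P=XC$ with $C=\mathrm{diag}(\rho I_2,\rho^2 I_3)$) and the identical propagation of the $O(\rho^3)$ Taylor remainder through $(V_P^TV_P)^{-1}V_P^T$, giving $O(\rho^2)$ error in the first-derivative components and $O(\rho)$ in the second-derivative ones. The only difference is that you explicitly flag and sketch a justification for the uniform bound $(\widetilde V_P^T\widetilde V_P)^{-1}=O(1)$ via the near-regular hexagonal neighbor configurations of the icosahedral mesh, a nondegeneracy condition the paper uses silently after asserting only $X=O(1)$.
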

\begin{proof}
	According to (13) in the main body, we have
	\begin{equation*}
		F_P = V_P D_P + O(\rho^3),
	\end{equation*}
	and then
	\begin{align*}
		D_P = &(V_P^TV_P)^{-1}V_P D_P + (V_P^TV_P)^{-1}V_PO(\rho)\\
		=&\hat D_P + (V_P^TV_P)^{-1}V_PO(\rho^3).
	\end{align*}
	Actually, 
	
	\begin{equation*}
		V_P=\left[
		\begin{array}{ccccc}
			\vdots & \vdots  & \vdots  & \vdots  & \vdots  \\
			x_{i1} & x_{i2} & \frac{1}{2}x_{i1}^2 & x_{i1}x_{i2} & \frac{1}{2}x_{i2}^2\\
			\vdots  & \vdots  & \vdots & \vdots  & \vdots  \\
		\end{array}
		\right]
		= XC,
	\end{equation*}
	where 
	\begin{equation*}
		X = \left[
		\begin{array}{ccccc}
			\vdots & \vdots  & \vdots  & \vdots  & \vdots  \\
			\frac{x_{i1}}{\rho} & \frac{x_{i2}}{\rho} & \frac{x_{i1}^2}{2\rho^2} & \frac{x_{i1}x_{i2}}{\rho^2} & \frac{x_{i2}^2}{2\rho^2}\\
			\vdots  & \vdots  & \vdots & \vdots  & \vdots  \\
		\end{array}
		\right]
	\end{equation*}
	and
	\begin{equation*}
		C = \left[
		\begin{array}{cc}
			\rho I_2 & 0 \\
			0 & \rho^2 I_3 \\
		\end{array}
		\right].
	\end{equation*}
	Obviously $X=O(1)$, so we have
	\begin{align*}
		(V_P^TV_P)^{-1}V_PO(\rho^3) =& C^{-1}(X^TX)^{-1}X^TO(\rho^3)\\
		=&
		\left[
		\begin{array}{cc}
			\frac{I_2}{\rho} & 0 \\
			0 & \frac{I_3}{\rho^2} \\
		\end{array}
		\right]O(\rho^3)\\
		=&
		\left[
		\begin{array}{c}
			O(\rho^2)\bm{1}_2\\
			O(\rho)\bm{1}_3\\
		\end{array}
		\right],
	\end{align*}
	i.e.,
	$\forall \bm{w}\in \mathbb{R}^5$,
	\begin{align*}
		\bm{w}^TD_P=&\bm{w}^T\hat D_P +\bm{w}^T(V_P^TV_P)^{-1}V_PO(\rho^3)\\
		=&\bm{w}^T\hat D_P +O(\rho).
	\end{align*}
	$\hfill\blacksquare$  
\end{proof}

\begin{theorem}
	$\forall \widetilde R\in SO(3),$
	\begin{align}
		&\widetilde\Psi\left[\pi_{\widetilde R}^S[\bm{I}]\right]=\pi_{\widetilde R}^{SO}\left[\widetilde\Psi [\bm{I}]\right]+O(\rho),\label{app1}\\
		&\widetilde\Phi\left[\pi_{\widetilde R}^{SO}[\bm{F}]\right]=\pi_{\widetilde R}^{SO}\left[\widetilde\Phi [\bm{F}]\right]+O(\rho)+O\left(\frac{1}{N^2}\right),\label{app2}
	\end{align}
	where transformations acting on discrete inputs and feature maps are defined as $\pi_{\widetilde R}^S[\bm{I}](P)=\pi_{\widetilde R}^S[s](P)$ and $\pi_{\widetilde R}^{SO}[\bm{F}](P,i)=\pi_{\widetilde R}^{SO}[so](P,A_i)$, respectively.
	\label{theorem4}
\end{theorem}
\begin{proof}
	$\forall i= 0,1,\cdots,N-1$, the operator $\chi^{(Z_i)}$ is a linear combination of differential operators and $\widetilde \chi^{(Z_i)}$ is a linear combination of corresponding numerical estimations, except a trivial scalar. According to Lemma \ref{lemma3}, we have that $\forall P\in \Omega$,
	\begin{align*}
		\chi^{(Z_i)}[s](P) &= \widetilde\chi^{(Z_i)}[\bm{I}](P)+O(\rho),\\
		\chi^{(Z_i)}\left[\pi_{\widetilde R}^S[s]\right](P) &= \widetilde\chi^{(Z_i)}\left[\pi_{\widetilde R}^S[\bm{I}]\right](P)+O(\rho),
	\end{align*}
	i.e.,
	\begin{align}
		\Psi[s](P,Z_i) &= \widetilde\Psi[\bm{I}](P,i)+O(\rho),\notag\\
		\Psi\left[\pi_{\widetilde R}^S[s]\right](P,Z_i) &= \widetilde\Psi\left[\pi_{\widetilde R}^S[\bm{I}]\right](P,i)+O(\rho).\label{left1}
	\end{align}
	Easily, we have 
	\begin{equation}
		\pi_{\widetilde R}^{SO}\left[\Psi[s]\right](P,Z_i) = \pi_{\widetilde R}^{SO}\left[\widetilde\Psi[\bm{I}]\right](P,i)+O(\rho).\label{left2}
	\end{equation}
	From (\ref{equi1}) we know that the left hand sides of (\ref{left1}) and (\ref{left2}) equal, hence the right hand sides of the two equations are the same, which results in (\ref{app1}). 
	
	As for (\ref{app2}),
	\begin{align*}
		&\Phi [so](P,Z_i)\\
		=&\int_{SO(2)} \chi^{(Z_i)}_{Z}\,\,[so](P,Z_iZ) d\nu(Z)\\
		=&\frac{\nu(SO(2))}{N}\sum_{j=0}^{N-1} \chi^{(Z_i)}_{Z_j}[so](P,Z_iZ_j)+O\left(\frac{1}{N^2}\right)\\
		=& \frac{\nu(SO(2))}{N}\sum_{j=0}^{N-1} \left(\widetilde\chi^{(Z_i)}_{Z_j}[\bm{F}](P,i\textcircled{+}j)+O(\rho)\right) +O\left(\frac{1}{N^2}\right)\\
		=& \widetilde{\Phi}[\bm{F}](P,i)+O(\rho)+O\left(\frac{1}{N^2}\right).
	\end{align*}
	Then we can prove (\ref{app2}) analogously.$\hfill\blacksquare$  
\end{proof}

\section{Equivariant Network Architectures}

When the inputs and feature maps consist of multiple channels, we utilize multiple $\Psi$'s and $\Phi$'s to process inputs and generate outputs. To be specific, for the input layer, where inputs $s$ consist of $M_s$ channels and the resulting feature maps $so^{(1)}$ consist of $M_1$ layer, we have

\begin{equation*}
	\left[
	\begin{array}{c}
		so_1^{(1)}\\
		\vdots\\
		so_{M_1}^{(1)}
	\end{array}
	\right]
	=\sigma\left(
	\left[
	\begin{array}{ccc}
		\Psi_{11} & \cdots & \Psi_{1M_s}\\
		\vdots  & \ddots&\vdots \\
		\Psi_{M_1 1}  & \cdots & \Psi_{M_1M_s}\\
	\end{array}
	\right]
	\left[
	\begin{array}{c}
		s_1\\
		\vdots\\
		s_{M_s}\\
	\end{array}
	\right]\right).
\end{equation*}
For the following layer, where feature maps $so^{(l)}$ at the $l$-th layer consist of $M_l$ channels, we have
\begin{scriptsize}
	\begin{equation*}
		\left[
		\begin{array}{c}
			so_1^{(l+1)}\\
			\vdots\\
			so_{M_{l+1}}^{(l+1)}
		\end{array}
		\right]
		=\sigma\left(
		\left[
		\begin{array}{ccc}
			\Phi^{(l)}_{11}  & \cdots & \Phi^{(l)}_{1M_l}\\
			\vdots  & \ddots&\vdots \\
			\Phi^{(l)}_{M_{l+1} 1} & \cdots & \Phi^{(l)}_{M_{l+1}M_l}\\
		\end{array}
		\right]
		\left[
		\begin{array}{c}
			so_1^{(l)}\\
			\vdots\\
			so_{M_{l}}^{(l)}\\
		\end{array}
		\right]\right).
		\label{mphi}
	\end{equation*}
\end{scriptsize}
Finally, we obtain a more general network architecture, and it is easy to verify that equivariance can still be preserved through this network. 

Particularly, as for 
\begin{align}
	\Phi [so](P_R,A_R)=\int_{SO(2)} \chi^{(A_R)}_{A}\,\,[so](P_R,A_RA) d\nu(A)\label{phi2},
\end{align}
if we take $w_{A,i}=0$ for any $A\in SO(2)$ and $i=2,3,\cdots,6$, then $(\ref{phi2})$ can be rewritten as 
\begin{equation}
	\Phi [so](P_R,A_R)=\int_{SO(2)} w_{A,1}so(P_R,A_RA) d\nu(A),\label{one}
\end{equation}
which is analogous to the conventional $1\times 1$ convolution in planar CNNs.

\section{Model Architectures and Training Details}
In this section we provide network architectures and training details for reproducing our results in experiments. Each experiment is run for $5$ times and implemented using Pytorch.
\subsection{Spherical MNIST Classification}
The small model consists of 4 convolution layers and 3 fully connected (FC) layers. The convolution layers have $8, 12, 16$ and $28$ output channels, and the FC layers have $28,28$ and $10$ channels, respectively. The large model consists of 5 convolution layers and 3 fully connected (FC) layers. The convolution layers have $8, 12, 16, 24,$ and $48$ output channels, and the FC layers have $48,48$ and $10$ channels, respectively. $N$ is set to $16$, and downsampling is performed after layer 2. In between convolution layers, we use batch normalization \cite{ioffe2015batch} and ReLU nonlinearities. 

The models are trained using the Adam algorithm \cite{kingma2014adam}. We use generalized He's weight initialization scheme introduced in \cite{weiler2018learning} for the convolution layers and Xavier initialization \cite{glorot2010understanding} for the FC layers. For N/R task, we use dropout for better generalization. We train using a batch size of $16$ for $80$ epochs, an initial learning rate of $0.01$ and a step decay of $0.5$ per $10$ epochs. We use the cross-entropy loss for training the classification network.

\begin{table*}[t]

	\centering
	\scalebox{0.85}{
		\linespread{1.0}\selectfont
		\begin{tabular}{l|c|ccccccccccccc}
			\hline
			Model   & Mean & beam & board & bookcase & ceiling & chair & clutter &column & door & floor & sofa & table & wall & window  \\
			\hline	
			UNet  & 50.8 & 17.8 & 40.4 & 59.1 & 91.8 & 50.9 & 46.0 & 8.7 & 44.0 & 94.8 & 26.2 & 68.6 & 77.2 & 34.8 \\	
			UGSCNN & 54.7 & 19.6 & 48.6 & 49.6 & 93.6 & 63.8 & 43.1 & 28.0 & 63.2 & 96.4 & 21.0 & 70.0 & 74.6 & 39.0 \\
			Icosahedral CNN & 55.9 & - & -& -& -& -& -& -& -& -& -& -& -& -\\
			HexRUNet & 58.6 & \textbf{23.2} & 56.5 & \textbf{62.1} & \textbf{94.6} & 66.7 & 41.5 & 18.3 & \textbf{64.5} & 96.2& \textbf{41.1} & \textbf{79.7} & 77.2 & 41.1 \\
			\hline
			PDO-e{$\text{S}^\text{2}$}CNN  & \textbf{60.4} & 22.2 & \textbf{59.6} & 59.7 & 93.5& \textbf{67.4} & \textbf{53.9} & \textbf{26.3} & 64.1&\textbf{97.1}&30.8&75.4&\textbf{81.9}& \textbf{53.4}\\
			\hline
		\end{tabular}
	}
	\caption{mAcc comparison on 2D-3D-S dataset. Per-class accuracy is shown when available.} 
	\label{tab4}
\end{table*}

\begin{table*}[t]
	\centering
	\scalebox{0.85}{
		\linespread{1.0}\selectfont
		\begin{tabular}{l|c|ccccccccccccc}
			\hline
			Model   & Mean & beam & board & bookcase & ceiling & chair & clutter &column & door & floor & sofa & table & wall & window  \\
			\hline	
			UNet  & 35.9 & 8.5 & 27.2 & 30.7 & 78.6 & 35.3 & 28.8 & 4.9 & 33.8 & 89.1 & 8.2 & 38.5 & 58.8 & 23.9 \\	
			UGSCNN & 38.3 & 8.7 & 32.7 & 33.4 & 82.2 & 42.0 & 25.6 & 10.1 & 41.6 & 87.0 & 7.6 & 41.7 & 61.7 & 23.5 \\
			Icosahedral CNN& 39.4 & - & -& -& -& -& -& -& -& -& -& -& -& -\\
			HexRUNet & 43.3 & 10.9 & 39.7 & 37.2 & \textbf{84.8} & \textbf{50.5} & 29.2 & 11.5 & 45.3 & \textbf{92.9} & \textbf{19.1} & 49.1 & 63.8 & 29.4 \\
			\hline
			PDO-e{$\text{S}^\text{2}$}CNN  & \textbf{44.6} &\textbf{11.4} &\textbf{43.3} &\textbf{38.2} &83.9& 50.3&\textbf{31.3}&\textbf{12.4}& \textbf{48.4}& 90.0&18.1&\textbf{49.5}&\textbf{65.9} & \textbf{37.1}\\
			\hline
		\end{tabular}
	}
	\caption{mIoU comparison on 2D-3D-S dataset. Per-class IoU is shown when available.}
	\label{tab5}
\end{table*}

\subsection{Omnidirectional Image Segmentation}
Following \cite{jiang2019spherical}, we preprocess the data into  a spherical signal by sampling the original rectangular images at the latitude-longitudes of the spherical mesh vertex positions. The input RGB-D channels are interpolated using bilinear interpolation, and semantic labels are acquired using nearest-neighbor interpolation. The input and output spherical signals are at the level-5 resolution. 

The network architecture is a residual U-Net \cite{he2016deep,ronneberger2015u} using PDO-e{$\text{S}^\text{2}$}Convs, which consists of an encoder and a decoder. The encoder network takes as input a signal at resolution $r=5$. We use a similar network architecture as that in \cite{jiang2019spherical} and the details are shown in Table \ref{tab2}, and $N$ is set to $8$ except the last layer. We use a trivial PDO-e{$\text{S}^\text{2}$}Conv ($N=1$) for the last layer to obtain 15 output channels. Note that we use $15$ output channels since the 2D-3D-S dataset has two additional classes (invalid and unknown) that are not evaluated for performance.

\begin{table}[h]
	\centering
	\linespread{1.0}\selectfont
	\resizebox{0.9\columnwidth}{!}{
		\smallskip \begin{tabular}{ccccccc}
			\hline
			Level & a & Block & b & c & s & $N$\\
			\hline	\hline
			5  & 4 & Encoder & -  & 16 & 2 &8\\	
			4  & 16 & Encoder & 16 & 32 & 2 &8\\
			3  & 32 & Encoder & 32 & 64 & 2 & 8\\
			\hline \hline
			2 & 64 & Decoder & - & 64 &  0.5 & 8\\
			3 & 64 & Decoder & - & 64 &  1 & 8\\
			3 & $64\times 2$ & Decoder & 32 & 32 &  0.5 & 8\\
			4 & 32 & Decoder & - & 32 &  1 & 8\\
			4 & $32\times 2$ & Decoder & 16 & 16 &  0.5 & 8\\
			5 & 16 & Decoder & - & 16 &  1 & 8\\
			5 & $16\times 2$ & Decoder & 16 & 16 &  1 & 8\\
			\hline\hline
			5 & 16 & PDO-e{$\text{S}^\text{2}$}Conv & - & 8 &  1 & 8\\
			5 & $8\times 8$ & PDO-e{$\text{S}^\text{2}$}Conv & -  & 15  &  1 & 1\\
			\hline
		\end{tabular}
	}
	\caption{The architecture of PDO-e{$\text{S}^\text{2}$}CNN used in the 2D-3D-S image segmentation experiments. a, b, c and s stands for input channels, bottleneck channels, output channels, and strides, respectively. When $s=2$, downsampling is performed using average pooling, and when $s=0.5$, upsampling is applied using linear interpolation.}
	\label{tab2}
\end{table}

We use the Adam optimizer to train our network for $200$ epochs, with an initial learning rate of $0.01$ and a step decay of $0.9$ per $20$ epochs. Following \cite{jiang2019spherical}, we use the weighted cross-entropy loss for training, and the loss for each class is of the following weighting scheme:
\begin{equation*}
	w_c=\frac{1}{1.02+\log(f_c)},
\end{equation*}
where $w_c$ is the weight corresponding to class $c$, and $f_c$ is the frequency by which class $c$ appears in the training set. We use zero weight for the two dropped classes (invalid and unknown). The detailed statistics for this task is shown in Tables \ref{tab4} and \ref{tab5}.

\subsection{Atomization Energy Prediction}
Following \cite{cohen2018spherical}, we represent each molecule as a spherical signal. Specifically, we define a sphere $S_i$ around $p_i$ for each atom $i$. The radius is kept uniform across atoms and molecules, and chosen minimal such that no intersections among spheres happen. We define potential functions $U_z = \sum_{j\neq i,z_j=z}\frac{z_iz}{|x-p_i|}$ and produce a $T$ channel spherical signal for each atom in the molecule. Finally, we represent these signals on a level-3 mesh. 

The architecture used on QM7 dataset is shown in Table \ref{tab6} and $N$ is set to $8$. We share weights among atoms making filters permutation invariant, by pushing the atom dimension into the batch dimension. We use global spatial pooling and orientation pooling after the last PDO-e{$\text{S}^\text{2}$}Conv. Next, we use DeepSet \cite{zaheer2017deep} to refine the resulting feature vectors. Both PDO-e{$\text{S}^\text{2}$}CNN and DeepSet are jointly optimized. Following \cite{cohen2018spherical}, we firstly train a simple MLP only on the $5$ frequencies of atom types in a molecule, and then train our main model on the residual. Specifically, we use the Adam optimizer to train this model using a batch size of $32$ for $30$ epochs, an initial learning rate of $0.001$ and a step decay of $0.1$ per $10$ epochs.

\begin{table}[h]
	\centering
	\linespread{1.0}\selectfont
	\resizebox{0.9\columnwidth}{!}{
		\smallskip \begin{tabular}{llcc}
			\hline
			PDO-e{$\text{S}^\text{2}$}CNN & Layer & Channels & Level\\
			\hline
			& PDO-e{$\text{S}^\text{2}$}Conv & 16 & 3\\	
			& PDO-e{$\text{S}^\text{2}$}Conv & 32 & 2\\
			& PDO-e{$\text{S}^\text{2}$}Conv & 64 & 1\\
			& PDO-e{$\text{S}^\text{2}$}Conv & 64 & 0\\
			& Global orientation pooling \\
			& Global spatial pooling \\
			\hline 
			DeepSet & Layer & Input/Hidden\\
			\hline
			& $\phi (\text{MLP})$ & 64/256\\
			& $\psi (\text{MLP})$ & 64/512\\
			\hline
		\end{tabular}
	}
	\label{tab6}
		\caption{The architecture used in QM7 atomization energy prediction experiments. Downsampling is performed using average pooling.}
\end{table}
\end{document}